\newtheorem{Proposition}{Proposition}
\newtheorem{Definition}{Definition}
\newtheorem{Theorem}{Theorem}
\tikzset{>={Stealth[width=3mm]}}
\definecolor{navy}{RGB}{0,0,128}
\definecolor{darkred}{RGB}{139,0,0}
\definecolor{darkgreen}{RGB}{0,100,0}
\definecolor{custompurple}{RGB}{128,0,128}
\definecolor{ppurple}{HTML}{9400D3}
\definecolor{deeporange}{RGB}{230,115,0}
\newcommand{\gettikzxy}[3]{%
  \tikz@scan@one@point\pgfutil@firstofone#1\relax
  \edef#2{\the\pgf@x}%
  \edef#3{\the\pgf@y}%
}
\newcommand{\markRightAngle}[4][0.2cm]{
    \coordinate (tempa) at ($(#3)!#1!(#2)$);
    \coordinate (tempb) at ($(#3)!#1!(#4)$);
    \coordinate (tempc) at ($(tempa)!0.5!(tempb)$); 
    \draw[line width=1.5pt] (tempa) -- ($(#3)!2!(tempc)$) -- (tempb);
 }
\title{\LARGE \bf Online Efficient Safety-Critical Control for Mobile Robots in Unknown Dynamic Multi-Obstacle Environments}
\author{Yu Zhang$^{1}$, Guangyao Tian$^{1}$, Long Wen$^{1}$, Xiangtong Yao$^{1}$, Liding Zhang$^{1}$, \\Zhenshan Bing$^{1}$, Wei He$^{2}$, and Alois Knoll$^{1}$
	\thanks{$^{1}$Authors from the Department of Informatics, Technical University of 
        Munich, Germany.
        {\tt\small zy.zhang@tum.de}}%
	\thanks{$^{2}$Authors from the Department of Intelligence Science and Technology, University of Science and Technology Beijing, Beijing 100083, China.}%
}
\begin{document}
\setstretch{0.92}
\maketitle
\thispagestyle{empty}
\pagestyle{empty}

\begin{abstract}
This paper proposes a LiDAR-based goal-seeking and exploration framework, addressing the efficiency of online obstacle avoidance in unstructured environments populated with static and moving obstacles. This framework addresses two significant challenges associated with traditional dynamic control barrier functions (D-CBFs): their online construction and the diminished real-time performance caused by utilizing multiple D-CBFs. To tackle the first challenge, the framework's perception component begins with clustering point clouds via the DBSCAN algorithm, followed by encapsulating these clusters with the minimum bounding ellipses (MBEs) algorithm to create elliptical representations. By comparing the current state of MBEs with those stored from previous moments, the differentiation between static and dynamic obstacles is realized, and the Kalman filter is utilized to predict the movements of the latter. Such analysis facilitates the D-CBF's online construction for each MBE. To tackle the second challenge, we introduce buffer zones, generating Type-II D-CBFs online for each identified obstacle. Utilizing these buffer zones as activation areas substantially reduces the number of D-CBFs that need to be activated. Upon entering these buffer zones, the system prioritizes safety, autonomously navigating safe paths, and hence referred to as the exploration mode. Exiting these buffer zones triggers the system's transition to goal-seeking mode. We demonstrate that the system's states under this framework achieve safety and asymptotic stabilization. Experimental results in simulated and real-world environments have validated our framework's capability, allowing a LiDAR-equipped mobile robot to efficiently and safely reach the desired location within dynamic environments containing multiple obstacles. Videos are available from {\hypersetup{urlcolor=gray} \href{https://zyzhang4.wixsite.com/iros\_2024\_yuzhang}{https://zyzhang4.wixsite.com/iros\_2024\_yuzhang}}.
\end{abstract}

\section{Introduction}
Ensuring safety while accomplishing tasks in complex environments is a fundamental and challenging task that has been extensively researched within the robotics communities \cite{10015199}. The advent of control barrier functions (CBFs) has significantly advanced this field, offering a robust theoretical groundwork for safety through forward invariance \cite{ames2019control}. Two methodologies are usually incorporated with CBFs into a unified optimization problem to ensure safety while navigating to the desired goal configuration. One is model predictive control (MPC) \cite{9483029}, which can generate more complex and predictive behavior. The other is the control Lyapunov function (CLF) \cite{ames2016control}, known for conserving computational resources and its role in ensuring asymptotic stability. Among existing CBFs-based approaches, most works operate under the assumption that the obstacle locations are known in the state space and unsafe regions can be precisely and mathematically defined. However, such presupposes are impractical for real-world robotics applications due to the unknown dynamics of the environment. 

To enable robotic systems to identify unsafe areas in unfamiliar environments, cameras and LiDAR are utilized as the primary sensory tools. Compared to detecting unsafe areas through cameras, constructing CBFs based on point clouds in unstructured environments is better suited to adverse lighting and weather conditions, hence gaining favor in recent research \cite{9197481}. Initially, several works have concentrated on employing machine learning techniques to derive CBFs for static obstacles \cite{9341190,9392327,9981177}. In \cite{9341190}, support vector machines are utilized for distinguishing safe from unsafe areas via offline training. Following this, in \cite{9392327}, authors employ single-line LiDAR data for online CBFs estimation, adopting a neural network for offline learning of signed distance functions, and employing incremental learning with replay memory for updates. This approach, utilizing a five-layer neural network, necessitates a CPU update time of 0.197 seconds. Mirroring the approach of \cite{9392327}, \cite{9981177} leverages a deep neural network for direct online parameter adjustments, achieving a quicker update average of 0.15 seconds on a GPU with fewer obstacles. In addition to focusing solely on CBFs learning, various works have expanded their exploration of learning-based methodologies within more comprehensive frameworks. This includes end-to-end safe reinforcement learning \cite{cheng2019end}, end-to-end safe imitation learning \cite{9993193}, and integrating them into CLF-CBFs frameworks \cite{9676477}.  Although some efforts have further enhanced the robustness of these learning methods \cite{dawson2022safe} and extended them to stochastic systems \cite{9990576}, the real-world applicability of these approaches is constrained \cite{ames2019control}. This limitation arises from prolonged online update periods for more obstacles and the necessity for unchanging environmental conditions, which reduce their practical effectiveness. 

To fully consider dynamic obstacles in real environments, a novel ellipses-based MPC-CBF method that employs multi-step prediction has been introduced \cite{8768044,10160857}. In \cite{8768044}, researchers model dynamic obstacles with ellipsoidal shapes, defining precise boundaries for collision prediction. This foundation is further enhanced in \cite{10160857} by integrating Kalman filter estimation errors into the CBF framework, thus improving its prediction capabilities. However, when compared to learning-based approaches that generate an all-encompassing CBF via neural networks, the ellipses-based strategy faces certain challenges. Primarily, it necessitates the identification of the smallest axes that fully encompass unsafe areas, a process complicated by the need for higher-order discriminant analysis that adds to the computational burden. Furthermore, the requirement to formulate multiple CBFs to account for various obstacles significantly increases the time needed for solving multi-step prediction problems like those in MPC-CBF methodologies, affecting the feasibility of real-time applications. 

In this paper, we introduce a novel goal-seeking and exploration framework that addresses the challenges above, ensuring the safety and asymptotic stability of a LiDAR-equipped mobile robot in real-time navigating through unknown environments. This environment contains obstacles of unknown quantities and states, under the condition that at least one safe trajectory to the goal exists. 
The key contributions of our research are outlined as follows: 


\begin{itemize}
    \item A refined local perception approach is introduced, facilitating the online differentiation of dynamic and static obstacles and creating corresponding D-CBFs.
    \item User-defined buffer zones are applied to evolve D-CBFs into Type-II D-CBFs, effectively shrinking their activation area and ensuring that the number of obstacles does not affect the quantity of activated Type-II D-CBFs, thereby substantially reducing optimization times for scenarios involving multiple D-CBFs.
    \item A distinctive goal-seeking and exploration framework, merging the benefits of CLF and MPC, is introduced. Rigorous validations of the controller's safety and stability within this framework are provided, and its effectiveness is confirmed through experimental verification.
\end{itemize}
\section{Problem Statement}

We describe the kinematics of a nonholonomic wheeled mobile robot as control-affine systems:
\begin{equation} \label{model}
\begin{aligned}
& \dot{\boldsymbol{x}}=f(\boldsymbol{x})+g(\boldsymbol{x}) \boldsymbol{u} \\
& \boldsymbol{y}=z(\boldsymbol{x})
\end{aligned}
\end{equation}
where $\boldsymbol{x} \in \mathcal{X} \subseteq \mathbb{R}^n$ is the states of robot, $f(\boldsymbol{x}): \mathcal{X} \rightarrow \mathbb{R}^n$ and $g(\boldsymbol{x}): \mathcal{X} \rightarrow \mathbb{R}^{n \times m}$  exhibit local Lipschitz continuity within their respective domains, $\boldsymbol{y} \in \mathcal{Y} \subseteq \mathbb{R}^z$ is the output of system, and $u \in \mathcal{U}:=\left\{u \in \mathbb{R}^m \mid A u \leq b\right\}$ is the control inputs with $A \in \mathbb{R}^{m \times m}$ and $b \in \mathbb{R}^{m}$. The occupied areas of the moving robot are represented by $\mathcal{C}(\boldsymbol{z})$, which are denoted by the union of $N_{rob}$ circles, i.e., $\mathcal{C}(\boldsymbol{z})=\cup_{k=1}^{N_{rob}} \mathcal{C}_{k}(\boldsymbol{z})$. The robot is equipped with a LiDAR for perceiving obstacle information in the environment. Obstacles are categorized into two types: static and dynamic. For the $N_{sta}$ static obstacles, the occupied areas are denoted by $\mathcal{O}_{sta}$, i.e., $\mathcal{O}_{sta} = \cup_{i=1}^{N_{sta}} \mathcal{O}_{i}$, and these areas have no overlap or intersection, i.e., $\mathcal{O}_i \cap \mathcal{O}_j=\emptyset$ for any $0 \leq i, j \leq N_{obs}$. For the $N_{mov}$ dynamic obstacles at time $t$, the space they occupy is denoted as $\mathcal{O}_{mov}$, i.e., $\mathcal{O}_{mov} = \cup_{i=1}^{N_{mov}} \mathcal{O}_{i}$. At the same instant, these moving obstacles have no intersection as well. Additionally, the robot utilizes its LiDAR sensor to collect the environment data $\mathcal{P}_{t}$ at each moment, noted by $\mathcal{P}_{t}=\cup_{j=1}^{N_{surf}} \boldsymbol{p}_{j} \subset \mathcal{P}\left(\boldsymbol{x}_t\right) \cap \partial z\left(\mathcal{O}\right)$, where $\mathcal{P}\left(\boldsymbol{x}_t\right)$ denotes the data points corresponding to the limits of the LiDAR's sensing range $\mathcal{P}\left(\boldsymbol{x}_t\right)$, and $\partial z\left(\mathcal{O}\right)$ represents the data points that directly relate to the surface information of all obstacles. 

\emph{Problem Formulation:}
Consider an affine control robotic system \eqref{model}, operating within environments characterized by initially unknown unsafe sets $\mathcal{O} = \mathcal{O}_{sta} \cup \mathcal{O}_{mov}$. A goal-seeking and exploration framework with LiDAR measurements $\mathcal{P}_{t}$ and a feedback control policy $\boldsymbol{u}^{*}$ is proposed. The objectives of this framework are: a) To enable the online generation of CBFs based on $\mathcal{P}_{t}$. b) To significantly decrease the solution time compared to existing approaches, enhancing operational efficiency in dynamic settings. c) To navigate the robot towards a desired goal while avoiding unsafe sets, ensuring $\mathcal{C}(\boldsymbol{z}) \cap \mathcal{O}=\emptyset$.

\section{Preliminary}
This section provides a detailed overview of the local safe path planner, which integrates MPC or CLF with D-CBFs as key components.
\subsection{System Safety and Control Barrier Functions}
To effectively consider both static and dynamic obstacles, D-CBFs are extensively utilized to ensure system safety. This methodology concurrently accounts for the system's state $\boldsymbol{x}$ and the obstacle states $\boldsymbol{x}_{obs}$. In a scenario where the operational space contains obstacles, the safe states, termed as $\mathcal{S}$, are defined as $\mathcal{S} = \left\{\boldsymbol{x}, \boldsymbol{x}_{obs} \in \mathcal{X} \mid h\left(z(\boldsymbol{x}, \boldsymbol{x}_{obs})\right) \geq 0\right\}$, aligning with the super-zero level-set of function $h(\cdot)$. The boundary of $\mathcal{S}$, pinpointed by the zero level-set, is described as $\partial \mathcal{S} = \left\{\boldsymbol{x}, \boldsymbol{x}_{obs} \in \mathcal{X} \mid h\left(z(\boldsymbol{x}, \boldsymbol{x}_{obs})\right) = 0\right\}$. Within this framework, system \eqref{model} is classified as \emph{safe} if it satisfies two conditions: firstly, the initial states $z(\boldsymbol{x}_{0}, \boldsymbol{x}_{obs}^{0})$ must be within $ \mathcal{S}$; secondly, for all $t \geq 0$, the state $z(\boldsymbol{x}, \boldsymbol{x}_{obs})$ must continuously remain within $\mathcal{S}$ \cite{ames2019control}.   
\begin{Definition} \cite{10160857}
Define the safety set $\mathcal{S} $ as the super-zero level set of the function $h\left(z(\boldsymbol{x}, \boldsymbol{x}_{obs})\right) : \mathcal{X} \rightarrow \mathbb{R}$. The function $h\left(z(\boldsymbol{x}, \boldsymbol{x}_{obs})\right)$ is considered as a candidate D-CBF if there exists a locally Lipschitz extended class $\mathcal{K}$ function $\alpha(\cdot)$, ensuring the maintenance of the following inequality for all $\boldsymbol{x}, \boldsymbol{x}_{obs} \in \mathcal{X}$:
  \begin{equation} \label{typeIIZCBF1}
      \sup _{\boldsymbol{u} \in \mathcal{U}}[\mathcal{L}_f h(\boldsymbol{x})+\mathcal{L}_g h(\boldsymbol{x}) \boldsymbol{u}+\underbrace{\frac{\partial h}{\partial \boldsymbol{x}_{obs}} \frac{\partial \boldsymbol{x}_{obs}}{\partial t}}_\varepsilon ]+\alpha(h) \geq 0 
  \end{equation}
where $\varepsilon$ represents the variability in the obstacle's position and shape affecting the safe sets. In cases where the obstacle is static, $\varepsilon$ assumes a value of zero, and \eqref{typeIIZCBF1} is called CBFs, indicating no change in its position or shape. The set of control inputs that conform to \eqref{typeIIZCBF1} is characterized as follows for all $\boldsymbol{x}, \boldsymbol{x}_{obs} \in \mathcal{X}$:
\begin{equation}
\mathcal{T}(\boldsymbol{x})=\left\{\boldsymbol{u} \in \mathcal{U} \mid \mathcal{L}_f h(\boldsymbol{x})+\mathcal{L}_g h(\boldsymbol{x}) u+ \varepsilon + \alpha(h) \geq 0\right\} . \nonumber
\end{equation} 
\end{Definition}
\subsection{Model Predictive Control with D-CBFs}
MPC is utilized to forecast future system behaviors and satisfy the specified tasks, including goal-seeking, path following, and tracking. The MPC-D-CBFs framework, structured to fulfill goal-seeking and safety tasks, is outlined as follows:
\begin{subequations} \label{mpc}
\begin{align}
J^* = \min _{\boldsymbol{x}_{0:N},\boldsymbol{u}_{0:N-1}} \sum_{k=0}^{N-1}& J\left(\boldsymbol{x}_{k}, \boldsymbol{u}_{k}\right)+J\left(\boldsymbol{x}_N, \boldsymbol{u}_N\right) \label{mpc_a} \\
\text{s.t.} \quad \boldsymbol{x}_{k+1} =f(\boldsymbol{x}_{k})&+g(\boldsymbol{x}_{k}) \boldsymbol{u}_{k}, k=0, \ldots, N-1 \label{mpc_b} \\
\boldsymbol{x}_k \in \mathcal{X}, \boldsymbol{u}_k &\in \mathcal{U}, k=0, \ldots, N-1 \label{mpc_c} \\
\boldsymbol{x}_{N} &\in \mathcal{X}_f \label{mpc_d} \\
\Delta h\left(\boldsymbol{x}_k, \boldsymbol{x}_{obs}^{k}\right) &\geq -\alpha(h), k=0, \ldots, N-1 \label{mpc_e}
\end{align}
\end{subequations}
where $N$ is the receding horizon, $J\left(\boldsymbol{x}_{k}, \boldsymbol{u}_{k}\right)$ represents the stage cost, $J\left(\boldsymbol{x}_N, \boldsymbol{u}_N\right)$ signifies the terminal cost, and \eqref{mpc_b} describes the discrete-time dynamics of the system. Concurrently, \eqref{mpc_e} formulates the discrete-time D-CBFs in terms of \eqref{typeIIZCBF1}, where $\Delta h\left(\boldsymbol{x}_k, \boldsymbol{x}_{obs}^{k}\right) = h\left(\boldsymbol{x}_{k+1}, \boldsymbol{x}_{obs}^{k+1}\right)-h\left(\boldsymbol{x}_k, \boldsymbol{x}_{obs}^{k}\right)$ represents the change in the barrier function over the discrete time steps. 
\subsection{Control Lyapunov Function with D-CBFs}
CLFs are utilized to accomplish tasks similar to those undertaken by MPC. Compared to MPC, CLFs effectively operate as a single-step prediction, thereby reducing the computational time required for getting control inputs. However, a notable limitation of CLFs is their difficulty in ensuring bounded control inputs, especially when the system is significantly distant from the goal.
\begin{Definition}\cite{ames2019control} (\emph{Asymptotically Control Lyapunov Function})
For stabilizing states of nonlinear system \eqref{model} to a goal $\boldsymbol{x}_{d}$, i.e., driving $ \boldsymbol{x}_{e} = \boldsymbol{x} - \boldsymbol{x}_d  \rightarrow 0$. If a continuous differentiable function $V: \mathbb{R}^{n} \rightarrow \mathbb{R}$ satisfies
\begin{equation} \label{CLF1}
c_1\|\boldsymbol{x}_{e}\|^2 \leq V(\boldsymbol{x}_{e}) \leq c_2\|\boldsymbol{x}_{e}\|^2
\end{equation}
\begin{equation} \label{CLF2}
\inf _{\boldsymbol{u} \in U}\left[\mathcal{L}_f V(\boldsymbol{x}_{e})+\mathcal{L}_g V(\boldsymbol{x}_{e}) \boldsymbol{u}+c_3 V(\boldsymbol{x}_{e})\right] \leq 0
\end{equation}
where $c_1>0$, $c_2>0$, and $c_3>0$ are constants for $\forall \boldsymbol{x} \in \mathbb{R}^n$, then $V$ is called the CLF, and the system \eqref{model} globally and exponentially achieves the ideal goal $\boldsymbol{x}_{d}$.
\end{Definition}
\begin{Proposition}\cite{ames2019control}
  (\emph{Sufficient condition for stability}) Upon selecting a candidate CLF $V(\boldsymbol{x}_{e})$ for the nonlinear system represented by \eqref{model}, the system can be rendered asymptotically stable by employing a control policy that satisfies $\pi(\mathbf{x}) \in\{\mathbf{u} \in U_\pi^v \mid \mathcal{L}_f V(\boldsymbol{x}_{e})+\mathcal{L}_g V(\boldsymbol{x}_{e}) \boldsymbol{u}+c_3 V(\boldsymbol{x}_{e})\leq 0 \}$.
  \label{preposition2}
\end{Proposition}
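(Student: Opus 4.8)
The plan is to verify directly that the chosen $V(\boldsymbol{x}_{e})$ acts as a Lyapunov function for the closed-loop system and then extract an explicit exponential decay rate via the comparison argument. First I would confirm that the set defining $\pi(\mathbf{x})$ is nonempty at each state: by the CLF property \eqref{CLF2}, the infimum over $\boldsymbol{u} \in U$ of $\mathcal{L}_f V(\boldsymbol{x}_{e}) + \mathcal{L}_g V(\boldsymbol{x}_{e})\boldsymbol{u} + c_3 V(\boldsymbol{x}_{e})$ is nonpositive for every $\boldsymbol{x}_{e}$, so a feasible $\boldsymbol{u}$ satisfying the constraint exists pointwise. Because $f$ and $g$ in \eqref{model} are locally Lipschitz, any such feedback yields a well-posed closed-loop system with locally unique trajectories along which $V$ is differentiable in time.

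Next I would differentiate $V$ along the closed-loop flow. By the chain rule, $\dot{V}(\boldsymbol{x}_{e}) = \mathcal{L}_f V(\boldsymbol{x}_{e}) + \mathcal{L}_g V(\boldsymbol{x}_{e})\,\pi(\mathbf{x})$, and the inequality defining the admissible control set gives immediately $\dot{V}(\boldsymbol{x}_{e}) \leq -c_3 V(\boldsymbol{x}_{e})$. The sandwich bound \eqref{CLF1}, namely $c_1\|\boldsymbol{x}_{e}\|^2 \leq V(\boldsymbol{x}_{e}) \leq c_2\|\boldsymbol{x}_{e}\|^2$ with $c_1,c_2>0$, certifies that $V$ is positive definite, radially unbounded, and vanishes only at $\boldsymbol{x}_{e}=\boldsymbol{0}$, so it is a legitimate Lyapunov candidate for the equilibrium $\boldsymbol{x}_{e}=\boldsymbol{0}$ (equivalently $\boldsymbol{x}=\boldsymbol{x}_{d}$).

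Then I would apply the comparison lemma to the scalar inequality $\dot{V} \leq -c_3 V$, obtaining $V(\boldsymbol{x}_{e}(t)) \leq V(\boldsymbol{x}_{e}(0))\,e^{-c_3 t}$ for all $t \geq 0$. Chaining this with the quadratic bounds yields $\|\boldsymbol{x}_{e}(t)\|^2 \leq \tfrac{c_2}{c_1}\|\boldsymbol{x}_{e}(0)\|^2\,e^{-c_3 t}$, which shows $\boldsymbol{x}_{e}\to\boldsymbol{0}$ exponentially and hence establishes the claimed asymptotic stability of the goal $\boldsymbol{x}_{d}$.

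The main obstacle I anticipate is not the Lyapunov computation, which is routine, but the well-posedness of the feedback: ensuring that a selection $\pi(\mathbf{x})$ from the admissible set can be made regular enough (locally Lipschitz, or at least measurable) for the closed-loop trajectories to exist and for $\dot{V}$ to be meaningful along them. The standard resolution is to note that the constraint is affine in $\boldsymbol{u}$ and the feasible set varies continuously, so a suitable regular selection (for instance, the pointwise min-norm controller) exists; once that regularity is secured, the decay estimate above closes the argument.
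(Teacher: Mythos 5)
Your proof is correct and is essentially the argument the paper relies on: the paper does not prove this proposition itself but defers to the cited reference \cite{ames2019control}, whose proof is exactly the chain you give --- feasibility of the admissible set from \eqref{CLF2}, the closed-loop bound $\dot{V}(\boldsymbol{x}_{e}) \leq -c_3 V(\boldsymbol{x}_{e})$, the comparison lemma, and the quadratic sandwich \eqref{CLF1} to convert decay of $V$ into decay of $\|\boldsymbol{x}_{e}\|$, with the min-norm (Sontag-type) selection handling Lipschitz regularity of the feedback. Note that your conclusion is in fact exponential convergence, which is stronger than the stated asymptotic stability and consistent with the paper's own Definition~2, so nothing is missing.
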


The aforementioned CLF and D-CBFs are integrated as a quadratic programming problem, facilitating the simultaneous achievement of stability and safety for the controlled system \eqref{model}. 
\begin{subequations} \label{my_equ6}
\begin{align}
& \boldsymbol{u}^*=\underset{(\boldsymbol{u}, \delta) \in \mathbb{R}^{m+1}}{\arg \min }\|\boldsymbol{u}\|_2^2+p \delta^2 \nonumber\\
& \text { s.t. } \inf _{\boldsymbol{u} \in U}\left[\mathcal{L}_f V(\boldsymbol{x}_{e})+\mathcal{L}_g V(\boldsymbol{x}_{e}) \boldsymbol{u}+c_3 V(\boldsymbol{x}_{e})\right] \leq \delta \label{my_equ7}\\
& \qquad \qquad\mathcal{L}_f h(\boldsymbol{x})+\mathcal{L}_g h(\boldsymbol{x}) u+ \varepsilon + \alpha(h) \geq 0,  \label{my_equ8}
\end{align}
\end{subequations}
where $\delta$ denotes a slack variable to obtain feasible solutions for the optimization problem, while still satisfying the D-CBF constraints strictly.
\subsection{Challenges}
Reviewing \eqref{mpc} and \eqref{my_equ6}, we identify two critical limitations in the existing approach: a) The challenge of constructing CBFs and D-CBFs online when the quantities and states of obstacles are unknown. b) The significant increase in solution time as the number of obstacles grows. The subsequent sections will address these two issues in detail.

\section{Safety-critical control based on Type-II D-ZCBFS}
This section presents a novel goal-seeking and exploration framework based on Type-II D-ZCBFs. This framework is structured into three key components: local perception, CBFs-activation, and safety-critical control.
\subsection{Type-II Zeroing Control Barrier Functions}
Introducing Type-II ZCBFs aims to decrease the activation length of CBFs, thereby reducing the need for activation during the whole control process.
\begin{figure}[htbp]
\centering
  \subfigure[]{
    \begin{minipage}[t]{0.47\linewidth}
        \centering
        \begin{tikzpicture}
        \node[fill opacity=0.1] at (0,0) {\includegraphics[width=\linewidth,trim=0 0 0 0,clip]{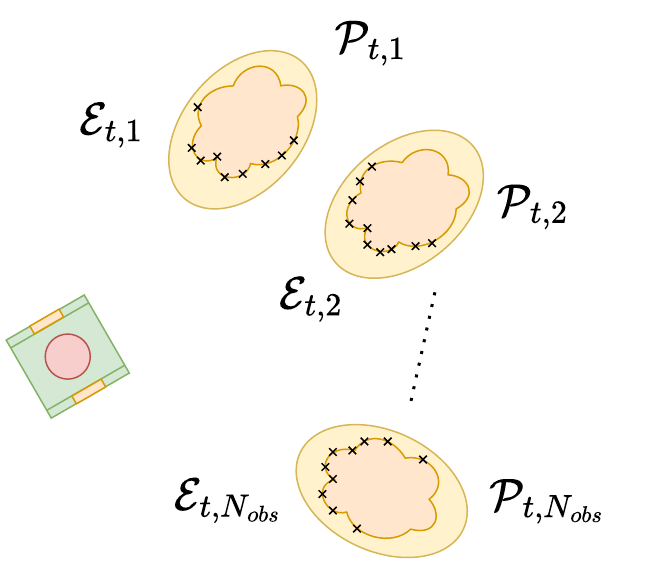}};
        \end{tikzpicture}
    \end{minipage}
   }%
  \subfigure[]{
    \begin{minipage}[t]{0.47\linewidth}
        \centering
        \begin{tikzpicture}
        \node[fill opacity=0.1] at (0,0) {\includegraphics[width=\linewidth,trim=0 0 0 0,clip]{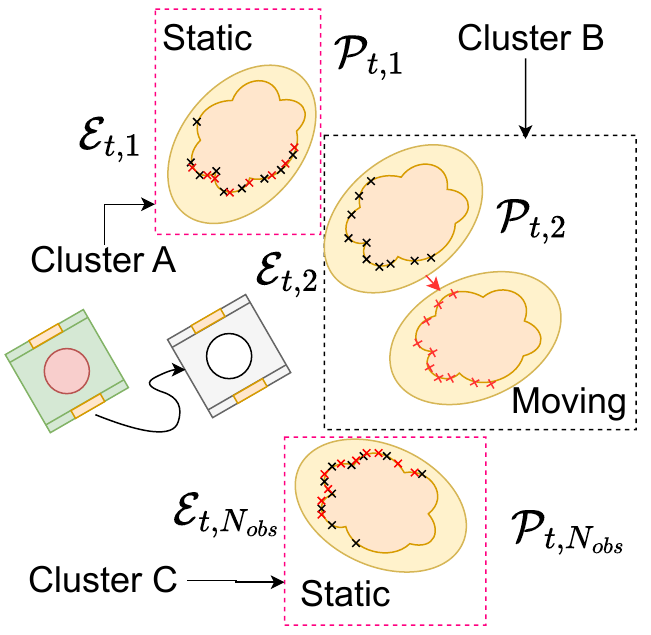}};
        \end{tikzpicture}
        \vspace{-1em}
    \end{minipage}
   }
  \caption{Visualization of the algorithm for clustering and classification. (a) The DBSCAN clustering algorithm is applied to point cloud data $\mathcal{P}_{t}$, subsequently resulting in $\mathcal{E}_{t,k}$ through the MBE algorithm.  (b) Clustering outcomes obtained by the Kuhn–Munkres algorithm.}
 \label{fig:improved_subfig}
 \end{figure}
\begin{Definition}
Let the safety set $\mathcal{S}$ be the zero super level set of $h\left(\boldsymbol{x}\right) : \mathcal{X} \rightarrow \mathbb{R}$. The function $h\left(\boldsymbol{x}\right)$ qualifies as a Type-II ZCBF concerning the set $\mathcal{B} = \left\{\boldsymbol{x} \in \mathcal{X} \subset \mathbb{R}^n \mid h\left(\boldsymbol{x}\right) \in \left[-b, a\right]\right\}$ if there exists continuous functions $\alpha^{*}(\cdot)$ satisfying $\left.\alpha^{*}\right|_{\mathbb{R}_{\geq 0}} \in$ class-$\mathcal{K}$ and $\alpha^{*}(h\left(\boldsymbol{x}\right)) \leq 0$ for all $h\left(\boldsymbol{x}\right) < 0$, such that the following inequality is upheld: 
  \begin{equation} \label{typeIIZCBF}
      \sup _{\boldsymbol{u} \in \mathcal{U}}\left[\mathcal{L}_f h(\boldsymbol{x})+\mathcal{L}_g h(\boldsymbol{x}) \boldsymbol{u}+\alpha(h(\boldsymbol{x}))\right] \geq 0, \forall \boldsymbol{x} \in \mathcal{B}
  \end{equation}
\end{Definition}

The set of control inputs adhering to \eqref{typeIIZCBF} is defined as follows for all $\boldsymbol{x} \in \mathcal{B}$:
\begin{equation}
\mathcal{T}^{*}(\boldsymbol{x})=\left\{\boldsymbol{u} \in \mathcal{U} \mid \mathcal{L}_f h(\boldsymbol{x})+\mathcal{L}_g h(\boldsymbol{x}) u+\alpha(h(\boldsymbol{x})) \geq 0\right\} . \nonumber
\end{equation}

\begin{Theorem}
Define the safety set $\mathcal{S}$ as the the zero super-level set of $h\left(\boldsymbol{x}\right) : \mathcal{X} \rightarrow \mathbb{R}$. Suppose each $h\left(\boldsymbol{x}\right)$ is a Type-II ZCBF with non-zero gradient $\nabla h\left(\boldsymbol{x}\right)$ at all $\boldsymbol{x} \in \partial \mathcal{S}$, considering the specified set $\mathcal{B}$ and function $\alpha^{*}(\cdot)$. (i) Given the existence of a local Lipschitz continuous mapping $\boldsymbol{u}: \boldsymbol{x} \in \mathcal{X} \mapsto \mathcal{T}^{*}(\boldsymbol{x})$, the corresponding closed-loop system \eqref{model} upholds safety relative to the set $\mathcal{S}$. (ii) If $\mathcal{B}$ is compact, $\alpha^{*}(\cdot)$ follows the chosen criteria, $\boldsymbol{x}$ remains bounded during the whole process, $\mathcal{D} = \mathcal{S} \cup \mathcal{B}$ forms a connected set, and no solution of the closed-loop system remains consistently within the set $\Psi:= \{\boldsymbol{x} \in \mathcal{D} \backslash \mathcal{S}: \dot{h}\left(\boldsymbol{x}\right)=0\}$, it implies the asymptotic stability of the set $\mathcal{S}$. 
\end{Theorem}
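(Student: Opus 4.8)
The plan is to treat the two claims separately: safety (i) via a localized comparison/Nagumo argument, and asymptotic stability (ii) via an invariance-principle argument that exploits the sign structure the Type-II condition enforces in the outer buffer layer.

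For claim (i), I would first note that local Lipschitzness of $\boldsymbol{u}(\cdot)$ together with that of $f,g$ guarantees a unique closed-loop solution, so $t \mapsto h(\boldsymbol{x}(t))$ is differentiable along trajectories. Since $\boldsymbol{u}(\boldsymbol{x}) \in \mathcal{T}^{*}(\boldsymbol{x})$, the defining inequality $\dot h = \mathcal{L}_f h + \mathcal{L}_g h\,\boldsymbol{u} \geq -\alpha(h(\boldsymbol{x}))$ holds for every $\boldsymbol{x} \in \mathcal{B}$, and in particular on $\partial\mathcal{S} = \{h=0\}\subset\mathcal{B}$. The key step is a localized comparison lemma: for a trajectory starting in $\mathcal{S}$, I would suppose toward a contradiction that $h$ becomes negative, take the first such crossing, observe that on a left-neighborhood of it the state lies in the inner layer $\{0\le h\le a\}\subset\mathcal{B}$ where the inequality is valid, and compare $h(\boldsymbol{x}(t))$ there with the solution of $\dot z = -\alpha(z)$. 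Because $\alpha$ is class-$\mathcal{K}$, this comparison solution stays nonnegative, forcing $h(\boldsymbol{x}(t))\geq 0$ and precluding any crossing; hence $\mathcal{S}$ is forward invariant. The non-vanishing gradient at $\partial\mathcal{S}$ is what makes the boundary a regular level set, so that the subtangentiality (Nagumo) condition is exactly the ZCBF inequality evaluated at $h=0$.

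For claim (ii), stability of $\mathcal{S}$ is already furnished by (i); the work is attractivity from the outer layer $\mathcal{D}\setminus\mathcal{S} = \{-b\le h<0\}$. The crucial observation is that the Type-II requirement $\alpha^{*}(h)\le 0$ for $h<0$ turns the ZCBF inequality into $\dot h \geq -\alpha^{*}(h)\geq 0$ throughout $\mathcal{D}\setminus\mathcal{S}$, so $h$ is non-decreasing there and cannot fall below $-b$, keeping the trajectory inside the compact set $\mathcal{B}$. I would then take the bounded monotone quantity $h(\boldsymbol{x}(t))$, which converges to some limit $L\le 0$, and invoke LaSalle's invariance principle (legitimate because the state is bounded and confined to a compact subset of the connected domain $\mathcal{D}$): the $\omega$-limit set $\Omega$ is nonempty, invariant, and contained in $\{\dot h = 0\}$. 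If $L<0$ then $\Omega\subseteq\{h=L<0,\ \dot h = 0\}\subseteq\Psi$, and invariance of $\Omega$ would exhibit a complete solution remaining in $\Psi$, contradicting the hypothesis; therefore $L=0$ and the trajectory approaches $\{h=0\}=\partial\mathcal{S}\subseteq\mathcal{S}$. Combined with the forward invariance from (i), this yields asymptotic stability of $\mathcal{S}$.

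I expect the main obstacle to be the rigorous bookkeeping in the invariance argument of (ii): one must verify that monotonicity of $h$ genuinely confines the trajectory to a compact subset of $\mathcal{D}$ on which LaSalle applies, and then carefully translate ``the $\omega$-limit set lies in $\Psi$ and is invariant'' into the excluded scenario ``a solution remaining consistently within $\Psi$.'' A secondary technical point is the localization in (i) — ensuring the comparison inequality is invoked only while the state is inside $\mathcal{B}$ — since the Type-II condition is not assumed to hold on the deep interior $\{h>a\}$.
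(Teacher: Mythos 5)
The paper never proves this theorem itself: immediately after the statement it says ``The proof of Definition 3 and Theorem 1 is provided in \cite{cortez2021robust},'' so there is no in-paper argument to compare against, and your proposal should be judged against the cited work's standard approach. Measured that way, your reconstruction is essentially the intended proof: part (i) by the localized comparison/Nagumo argument (valid because any zero-crossing must occur inside $\mathcal{B}$, where the ZCBF inequality is available), and part (ii) by confining the trajectory to the compact outer layer via $\dot{h}\geq-\alpha^{*}(h)\geq 0$, extracting the $\omega$-limit set, and using the hypothesis that no solution remains in $\Psi$ precisely to exclude an $\omega$-limit set on a negative level set of $h$. Two minor corrections. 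First, Lyapunov stability of the set $\mathcal{S}$ is \emph{not} ``already furnished by (i)'': forward invariance alone does not bound how far trajectories starting near $\mathcal{S}$ but outside it can stray. It follows instead from the monotonicity $\dot{h}\geq 0$ in the layer $\{-b\leq h<0\}$ together with compactness of $\mathcal{B}$ (so that small $|h|$ implies small distance to $\mathcal{S}$) --- you establish exactly this monotonicity in (ii), so the ingredient is present but attributed to the wrong step. Second, in the comparison step of (i) the definition only gives a \emph{continuous} $\alpha^{*}$, so you should either invoke the maximal-solution form of the comparison lemma or note that local Lipschitzness of the class-$\mathcal{K}$ function is needed for the textbook version; alternatively, the Nagumo route you mention sidesteps this, using $\nabla h\neq 0$ on $\partial\mathcal{S}$ and uniqueness of closed-loop solutions.
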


The proof of Definition 3 and Theorem 1 is provided in \cite{cortez2021robust}. This paper addresses dynamic obstacles by extending Type-II ZCBF to Type-II D-ZCBF and devising corresponding buffer zones, which are dynamically learned utilizing online LiDAR data.
\subsection{Local Perception}
This subsection outlines the methodology for distinguishing between static and dynamic obstacles utilizing online LiDAR data, encapsulating this information within MBEs. It also includes mechanisms for predicting future states of these MBEs, and for developing corresponding D-CBFs.
 \begin{figure}[htbp]
    \centering
        \begin{tikzpicture}
        \node[fill opacity=0.1] at (0,0) {\includegraphics[width=0.8\linewidth,trim=0 0 0 0,clip]{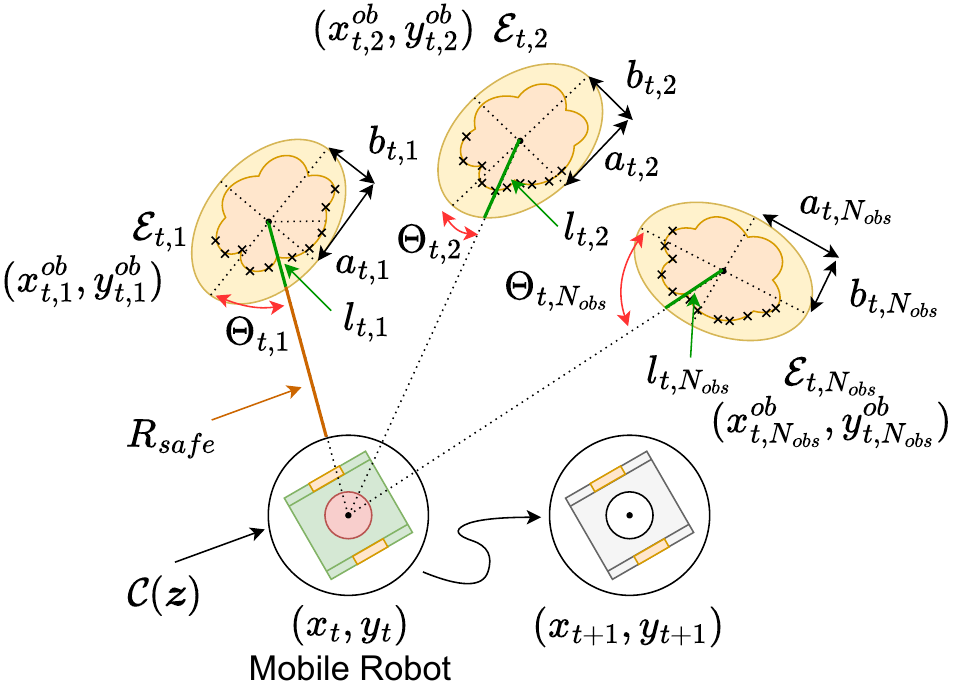}};
        \end{tikzpicture}
    \caption{A detailed explanation of the parameters for the mobile robot and MBEs at time $t$. $l_{t,k}$ is measured along the line that extends from the center $(x_{t,k}^{ob},y_{t,k}^{ob})$ of $\mathcal{E}_{t,k}$ to the center $(x_{t},y_{t})$ of the circular space $\mathcal{C}(\boldsymbol{z})$ occupied by the mobile robot. $R_{safe}$ is the minimum safe distance.}
    \label{fig:improved_subfig2}
    \vspace{-1em}
\end{figure}

Initially, we apply the DBSCAN method \cite{1ester1996density} to the point cloud data $\mathcal{P}_{t}$. This process involves setting a threshold of a minimum number of points, $N_{\min}$, and a maximum distance, $d_{p}$, between adjacent points. As a result, these point clouds are segmented into $N_{obs}$ distinct clusters at time $t$, represented as $\mathcal{P}_{t} = \sum_{k=1}^{N_{obs}} \mathcal{P}_{t,k}$. For each identified cluster $\mathcal{P}_{t,k}$, the MBE algorithm \cite{welzl2005smallest} is employed to generate $N_{obs}$ minimal bounding ellipses $\mathcal{E}_{t,k}$, $k=1,2,\ldots,N_{obs}$. Each ellipse $\mathcal{E}_{t,k}$ encapsulates the points within its corresponding spatial cluster $\mathcal{P}_{t,k}$, as shown in Fig. \ref{fig:improved_subfig}(a), and  
\begin{equation} \label{ellipses}
\begin{gathered}
f(\mathcal{E}_{t,k})=\frac{[(x-x_{t,k}^{ob}) \cos \theta_{t,k}^{ob}+(y-y_{t,k}^{ob}) \sin \theta_{t,k}^{ob}]^2}{a_{t, k}^2} \\
\quad+\frac{[(x-x_{t,k}^{ob}) \sin \theta_{t,k}^{ob}-(y-y_{t,k}^{ob}) \cos \theta_{t,k}^{ob}]^2}{b_{t,k}^2}-1 = 0,
\end{gathered}
\end{equation}
where \eqref{ellipses} refers to the set of points ${(x,y)}$  located on the surface of each ellipse. These ellipses are characterized by parameters including the central coordinate, the lengths of the semi-major and semi-minor axes, and the rotation angle, i.e., $\mathcal{E}_{t,k} = [x_{t,k}^{ob}, y_{t,k}^{ob}, a_{t, k}, b_{t, k}, \theta_{t,k}^{ob}]$.
 
To predict the future state changes of the estimated ellipses, we first store the state information of these ellipses from the previous moment $\mathcal{E}_{t}^{last} = [\mathcal{E}_{t,1}, \mathcal{E}_{t,2}, \ldots, \mathcal{E}_{t, N_{last}}]$, and conduct a comparison with the current state information, represented as $\mathcal{E}_{t}^{curr} = [\mathcal{E}_{t,1}, \mathcal{E}_{t,2}, \ldots, \mathcal{E}_{t, N_{curr}}]$. We then develop the affinity matrix $\mathcal{D} \in \mathbb{R}^{N_{last} \times N_{curr}}$ to record the distances $d_{i,j}$ between each pair of data centers in $\mathcal{E}_{t}^{last}$ and $\mathcal{E}_{t}^{curr}$, and the Kuhn–Munkres algorithm \cite{kuhn1955hungarian} is applied to link these datasets effectively. Additionally, any match where the distance $d_{i,j}$ exceeds the maximum threshold $d_{\max}$ is discarded. Ellipses in the new frame that do not find a match are assigned new labels. For ellipses within the same label, if the distance between their centers is less than $d_{\min}$, we classify them and the ellipses with new labels as static obstacles. Ellipses not meeting this criterion are considered dynamic obstacles, as shown in Fig. \ref{fig:improved_subfig}(b), and are subjected to state estimation via Kalman filtering \cite{welch1995introduction}. 
\begin{algorithm}
\caption{Online Local Perception Algorithm}
\begin{algorithmic}[1]
\small
\State \textbf{Inputs:} Online LiDAR data $\mathcal{P}_{t}$, robot's state $\boldsymbol{x}$, thresholds $d_{\max}$, $d_{\min}$, and previous MEBs $\mathcal{E}_{t}^{last}$.
\State Utilize DBSCAN to cluster $\mathcal{P}_{t}$, generating MBEs $\mathcal{E}_{t,k}$ for $k=1,2,\ldots, N_{curr}$.
\State Formulate $\mathcal{E}_{t}^{curr} = [\mathcal{E}_{t,1}, \mathcal{E}_{t,2}, \ldots, \mathcal{E}_{t, N_{curr}}]$.
\If{$\mathcal{E}_{t}^{last} \neq \emptyset$}
    \State Discriminate between dynamic and static MBEs utilizing the affinity matrix $\mathcal{D}$ and the Kuhn–Munkres algorithm.
    \State Apply the Kalman Filter to predict states for dynamic MBEs.
\Else
    \State Classify all MBEs as static.
\EndIf
\State Update $\mathcal{E}_{t}^{last} = \mathcal{E}_{t}^{curr}$.
\State Compute both D-CBFs for dynamic MBEs and CBFs for static MBEs utilizing \eqref{distance} and \eqref{Z-CBF-MBE}.
\State \textbf{Output:} Safety measures $h_{t,k}$ for $k=1,2,\ldots, N_{curr}$.
\end{algorithmic}
\end{algorithm}
For dynamic obstacles observed through MSE, we initially assume a constant motion velocity, designating the state variable as $\hat{\mathcal{E}}_{0,k}^{*} = [x_{0,k}^{ob}, y_{0,k}^{ob},\dot{x}_{0,k}^{ob}, \dot{y}_{0,k}^{ob},\Ddot{x}_{0,k}^{ob}, \Ddot{y}_{0,k}^{ob}, a_{0,k}, b_{0,k}, \theta_{0,k}]^{\top} \in \mathbb{R}^9$. Ultimately, by continuously adjusting and updating through Kalman filtering, we obtain the state $\hat{\mathcal{E}}_{t,k}^{*}$ at the current time $t$ and predict the state $\hat{\mathcal{E}}_{t+1,k}^{*}$ for the next instant. 

The standard Kalman filter process consists of two phases: prediction and update. For predictions extending beyond one step, the dynamic model can be applied successively for further predictions without proceeding to the update phase. We then construct the relative D-CBFs for each clustered static or moving obstacle $\mathcal{E}_{t,k}$. This process firstly includes determining the distance $l_{t,k}$ from each MBE's center to its surface. This distance is measured along the line that extends from the center of $\mathcal{E}_{t,k}$ to the center of the circular space $\mathcal{C}(\boldsymbol{z})$ occupied by the mobile robot. This geometric relationship is illustrated in Fig. \ref{fig:improved_subfig2}, and
\begin{equation} \label{distance}
l_{t,k}=\sqrt{\frac{a_{t,k}^2 b_{t,k}\left(1+\tan ^2 \Theta_{t,k}\right)}{b_{t,k}^2+a_{t,k}^2 \tan ^2 \Theta_{t,k}}}, k=0,1, \ldots, N_{obs}
\end{equation}
where $\Theta$ denotes the angle between the line connecting the obstacle's center to the mobile robot's center and the major axis of the ellipse, described as follows:
\begin{equation} \label{angle}
\begin{aligned}
\Theta_{t,k} = &\arctan 2((x_{t}-x_{t,k}^{ob}) \sin (\theta_{t,k}^{ob})+(y_{t,k}^{ob}-y_{t}) \cos (\theta_{t,k}^{ob}), \nonumber\\
& (x_{t,k}^{ob}-x_{t}) \cos (\theta_{t,k}^{ob})+(y_{t,k}-y_{t}) \sin (\theta_{t,k}^{ob})) 
\end{aligned}
\end{equation}
where $\arctan2(\cdot,\cdot)$ is a variant of the inverse tangent function. This formula incorporates the angle $\theta_{t,k}^{ob}$, as well as the positions of the mobile robot's center $(x_{t}, y_{t})$ and the ellipse's center $(x_{t,k}^{ob},y_{t,k}^{ob})$ at time $t$.

Leveraging \eqref{distance} along with state variables $\hat{\mathcal{E}}_{t,k}^{*}$ and $\hat{\mathcal{E}}_{t+1,k}^{*}$, we formulate the D-CBFs for each MBE as follows:
\begin{equation} \label{Z-CBF-MBE}
\begin{aligned}
h_{t,k}\left(z(\boldsymbol{x}, \boldsymbol{x}_{obs})\right) & = \left\|z(\boldsymbol{x}_{t})-z(\boldsymbol{x}_{t,k}^{ob})\right\|_2-l_{t,k} \\
& -R_{safe}-r, \quad k=0,1, \ldots, N_{obs}
\end{aligned}
\end{equation}
where $z(\boldsymbol{x}_{t}) = [x_{t}, y_{t}] \in \mathbb{R}^{2}$, $z(\boldsymbol{x}_{t,k}^{ob}) = [x_{t,k}^{ob},y_{t,k}^{ob}] \in \mathbb{R}^{2}$, $R_{safe}$ signifies the minimum safety distance that should be maintained between the obstacle and the mobile robot, while $r$ represents the radius of the circular area occupied by the mobile robot. Note that our local perception approach for \eqref{Z-CBF-MBE} is suitable for both discrete and continuous system representations, which is the foundation for the subsequent goal-seeking and exploration strategies.

\subsection{Goal-Seeking and Exploration Framework}
Observing \eqref{Z-CBF-MBE}, it becomes evident that as $N_{obs}$ increases, the optimization time for the traditional framework, as detailed in \eqref{mpc} and \eqref{my_equ6}, also extends. This subsection focuses on this issue. 

\begin{figure}[htbp]
\centering
  \begin{minipage}[t]{0.47\linewidth}
    \centering
    \begin{tikzpicture}
      \node[fill opacity=0.1] at (0,0) {\includegraphics[width=0.95\linewidth,trim=0 0 0 0,clip]{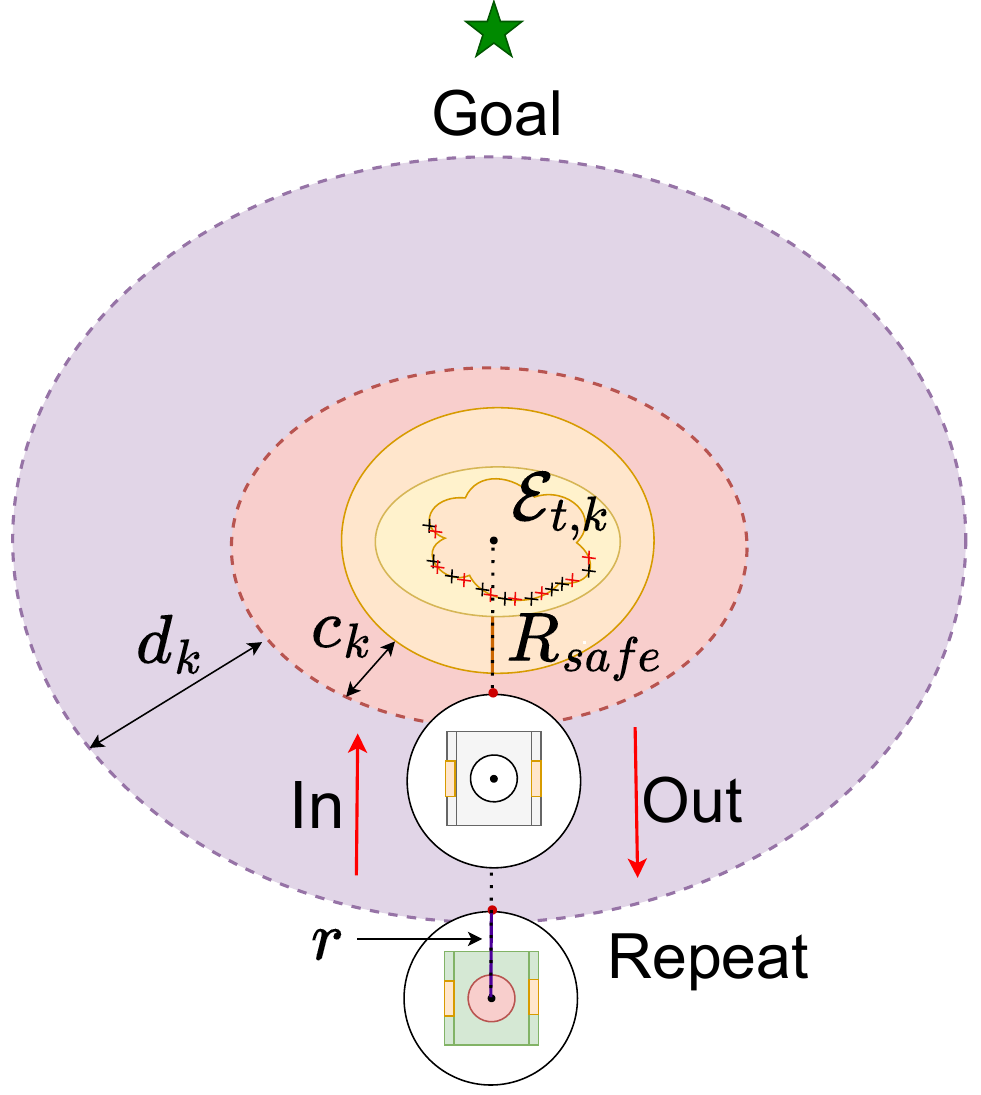}};
      \node[anchor=north west, font=\small] at (current bounding box.north west) {(a)}; 
    \end{tikzpicture}
  \end{minipage}%
  \hfill
  \begin{minipage}[t]{0.47\linewidth}
    \centering
    \begin{tikzpicture}
      \node[fill opacity=0.1] at (0,0) {\includegraphics[width=0.95\linewidth,trim=0 0 0 0,clip]{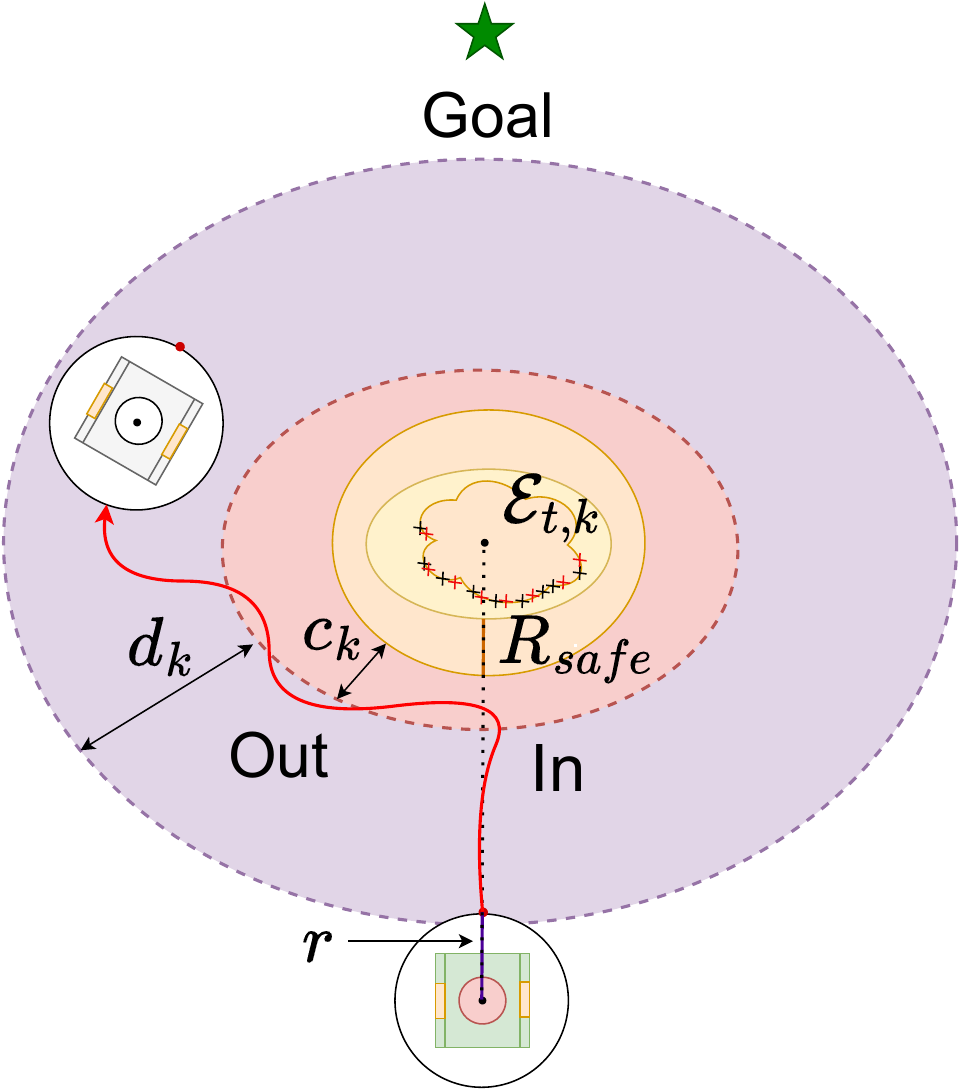}};
      \node[anchor=north west, font=\small] at (current bounding box.north west) {(b)}; 
    \end{tikzpicture}
  \end{minipage}
  \caption{The goal-seeking and exploration framework, with and without the backup safety controller for specific scenarios: (a) Without the backup safety controller, the mobile robot oscillates at the boundaries of two buffer zones, ensuring safety but unable to reach the goal fast. (b) With the backup safety controller, the mobile robot can alter its path through multi-step prediction, facilitating goal attainment.}
 \label{fig:improved_subfig1}
 \vspace{-0.4em}
\end{figure}

Our primary strategy involves creating a buffer zone for each MSE, activating the corresponding Type-II D-ZCBF only when the system's state enters this buffer zone, instead of activating all. We initially analyze scenarios where buffer zones do not overlap, which can be readily extended to cases with overlapping buffer zones. First, we define the safety set and the buffer set for $h_{t,k}(\cdot)$, $k=0,1, \ldots, N_{obs}$:
\begin{subequations}
\begin{align}
& \hat{\mathcal{S}}_{t,k}=\{z(\boldsymbol{x}_{t},\boldsymbol{x}_{t,k}^{ob}) \in \mathcal{X}: \hat{h}_{t,k} = h_{t,k}(z)-c_k \geq 0\}, \label{safe}\\
& \mathcal{B}_{t,k}=\{z(\boldsymbol{x}_{t},\boldsymbol{x}_{t,k}^{ob}) \in \mathcal{X}: \hat{h}_{t,k} \in\left[-c_k, d_k\right]\},\label{buffer}
\end{align}   
\end{subequations}
where $c_k$ and $d_k$ are user-defined parameters to specify the relative buffer set. If all $\hat{\mathcal{S}}_{t,k}$ are satisfied, then safety set $\mathcal{S}$ can also be assured due to $\mathcal{S} \subseteq \cap_{k=1}^{N_{obs}} \hat{\mathcal{S}}_{t,k}$.

The controller for maintaining safety with one MSE is introduced as: 
\begin{equation} \label{controller}
\boldsymbol{u}_{k}=(1-\rho_t(\hat{h}_{t,k})) \boldsymbol{u}_{s}^{k}+\rho_t(\hat{h}_{t,k}) \boldsymbol{u}_{nom}
\end{equation}
where $\boldsymbol{u}_{nom}$ represents the control input derived from the MPC or CLF method, exclusively focused on the goal-seeking task and not incorporating CBF considerations. The function $\rho_t(\hat{h}_{t,k})$  is defined as follows:
\begin{equation} \label{PPP}
\rho_t(\hat{h}_{t, k})= \begin{cases}1 & \text { if } \hat{h}_{t, k}>d_k \\ p(\hat{h}_{t, k}) & \text { if } \hat{h}_{t, k} \in\left[0, d_k\right] \\ 0 & \text { if } \hat{h}_{t, k}<0\end{cases}
\end{equation}
where $p(\cdot)$ is a continuous function that increases from $p(0) = 0$ to $p(d_{k}) = 1$, analogous to the error-shifting function described in \cite{10226609}. 

The safety-controller $\boldsymbol{u}_{s}^{k}$ is chosen as:
\begin{algorithm}
\caption{Goal-Seeking and Exploration Algorithm}
\small{
\begin{algorithmic}[1]
\State \textbf{Inputs:} Perception data $h_{t,k}$ for $k=1,2,\ldots, N_{curr}$, robot state $\boldsymbol{x}$, safety threshold $d_{bf}$, buffer zone boundaries $c_{k}$ and $d_{k}$, initialization $m=0$, $i=0$, and previous state $z_{last}^{j}$ setting $\hat{h}_{t,k} = 0$.
\For{$k = 1$ to $N_{curr}$}
    \State Construct buffer zones for $h_{t,k}$, obtaining $\hat{h}_{t,k}$ via \eqref{safe} and \eqref{buffer}.
    \If{$-c_{k} \leq \hat{h}_{t,k} \leq d_{k}$} 
        \State Set $h_{t,m} = \hat{h}_{t,k}$; increment $m$.
    \EndIf
    \If{$\hat{h}_{t,k} = 0$} 
        \State Log current position $z_{cur}^{i} = z(\boldsymbol{x})$; increment $i$.
    \EndIf
\EndFor
\If{any $\left\|z_{\text{cur}}^{i}-z_{last}^{j}\right\| < d_{bf}$} 
    \State Apply backup safety control $\boldsymbol{u}_{s}^{k} = \boldsymbol{u}_{b,s}^{k}$; compute $\boldsymbol{u}^{*}$ with \eqref{all-controller} utilizing activated $h_{t,m}$.
\Else
    \State Apply standard safety control $\boldsymbol{u}_{s}^{k}$; compute $\boldsymbol{u}^{*}$ with \eqref{all-controller} utilizing activated $h_{t,m}$.
\EndIf
\State \textbf{Output:} Safety control input $\boldsymbol{u}^{*}$.
\end{algorithmic}}
\end{algorithm}
\begin{subequations}\label{safe-controller}
\begin{align}
\boldsymbol{u}_{s}^{k} &= \underset{\boldsymbol{u}}{\arg \min} \left\|\boldsymbol{u} - \boldsymbol{u}_{nom}^{k}\right\|^{2}_{2}  \nonumber\\
\text{s.t.} \quad A \boldsymbol{u} &\leq b  \label{safe-controllerba}\\
\mathcal{L}_f \hat{h}_{t, k}(z) &+ \mathcal{L}_g \hat{h}_{t, k}(z) \boldsymbol{u} \nonumber \\
+ \varepsilon &+ \alpha(\hat{h}_{t, k}(z)) \geq 0, \ \forall z(\boldsymbol{x}_{t},\boldsymbol{x}_{t,k}^{ob}) \in \mathcal{B}_{t,k} \label{safe-controllerb}
\end{align}
\end{subequations} 
where \eqref{safe-controllerb} is pivotal in ensuring the system's safety. Given the convexity of $\mathcal{U}$, both $\boldsymbol{u}_{s}^{k}$ and $\boldsymbol{u}_{nom}^{k}$ residing within $\mathcal{U}$ are locally Lipschitz continuous. Therefore, $\boldsymbol{u}_{k}$ not only belongs to $\mathcal{U}$ but also retains this local Lipschitz continuity\cite{cortez2021robust}.
\begin{Theorem}
    When the system's distance from the MEB satisfies the safety criteria $l_{t,k}+R_{saf}+r$ and exceeds $c_{k}+d_{k}$, it is in the goal-seeking mode and ultimately converges to the desired position.
\end{Theorem}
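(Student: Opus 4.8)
The plan is to prove the statement in two stages: first that the hypothesis forces the blending weight in \eqref{controller} to unity so that the executed input collapses to the nominal goal-seeking law, and then that this nominal law inherits the convergence guaranteed by the CLF construction. First I would rewrite the hypothesis in barrier-function form. Using \eqref{Z-CBF-MBE}, the statement that the distance exceeds the safety criterion $l_{t,k}+R_{safe}+r$ by more than $c_k+d_k$ is exactly $\|z(\boldsymbol{x}_t)-z(\boldsymbol{x}_{t,k}^{ob})\|_2-(l_{t,k}+R_{safe}+r)=h_{t,k}>c_k+d_k$, and by the shift in \eqref{safe} this reads $\hat{h}_{t,k}=h_{t,k}-c_k>d_k$. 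Substituting into \eqref{PPP} gives $\rho_t(\hat{h}_{t,k})=1$ for that obstacle. When the condition holds for every $k=0,\ldots,N_{obs}$, each per-obstacle controller \eqref{controller} reduces to $\boldsymbol{u}_k=(1-1)\boldsymbol{u}_s^{k}+1\cdot\boldsymbol{u}_{nom}=\boldsymbol{u}_{nom}$, and the composite law assembled in Algorithm 2 likewise collapses to $\boldsymbol{u}_{nom}$. Since the defining feature of the framework is that no safety term is active, this is by construction the goal-seeking mode, which settles the first assertion.

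Next I would invoke the stability machinery. The input $\boldsymbol{u}_{nom}$ is the minimizer of the goal-seeking CLF program, i.e. the $\delta$-relaxed quadratic program \eqref{my_equ6} restricted to \eqref{my_equ7} (or its MPC counterpart), so it satisfies the Lyapunov decrease condition \eqref{CLF2}. Hence along the closed-loop trajectory $\dot{V}(\boldsymbol{x}_e)=\mathcal{L}_f V(\boldsymbol{x}_e)+\mathcal{L}_g V(\boldsymbol{x}_e)\,\boldsymbol{u}_{nom}\le -c_3 V(\boldsymbol{x}_e)<0$ for $\boldsymbol{x}_e\neq 0$. Combined with the quadratic sandwich bounds \eqref{CLF1}, the comparison lemma gives $V(\boldsymbol{x}_e(t))\le V(\boldsymbol{x}_e(0))e^{-c_3 t}\to 0$ in the unconstrained case, so $\boldsymbol{x}_e\to 0$, i.e. $\boldsymbol{x}\to\boldsymbol{x}_d$; under the input constraints $A\boldsymbol{u}\le b$ the slack $\delta$ may only yield asymptotic (rather than exponential) decay, which is exactly the guarantee furnished by Proposition 1 together with Definition 2. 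Either way this delivers the claimed convergence to the desired position.

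The main obstacle is not the algebra above but justifying that the robot actually remains in, or eventually returns to and stays in, the goal-seeking region while $V$ decays. The clean decrease argument presumes $\rho_t\equiv 1$ along the \emph{entire} trajectory, whereas the CLF descent toward $\boldsymbol{x}_d$ may cross some buffer zone $\mathcal{B}_{t,k}$ of \eqref{buffer}, momentarily reactivating the safety controller \eqref{safe-controllerb} and interrupting the monotone decay of $V$. I would close this gap by (i) using the standing assumption that a safe trajectory to the goal exists and that $\boldsymbol{x}_d$ lies in the interior of the safe region, so a neighborhood of the goal is free of buffer zones; (ii) invoking Theorem 1, whose safety and asymptotic-stability conclusions guarantee that each exploration episode keeps the state safe and drives $\hat{h}_{t,k}$ back above $d_k$; and (iii) appealing to the backup safety controller illustrated in Fig.~\ref{fig:improved_subfig1}, whose multi-step prediction is designed to prevent the livelock/oscillation at buffer boundaries. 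Making this termination-of-switching argument rigorous—ruling out an infinite chattering sequence at the thresholds $\hat{h}_{t,k}=d_k$ so that the mode switches terminate with the system permanently in goal-seeking mode—is the step I expect to require the most care.
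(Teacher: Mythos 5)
Your first paragraph is, in substance, the paper's entire proof of this theorem: the paper simply observes that under the stated distance condition $\rho_t(\hat{h}_{t,k})=1$, so \eqref{controller} collapses to $\boldsymbol{u}_{nom}$ and the system is by definition in goal-seeking mode; you make the same step explicit by threading the hypothesis through \eqref{Z-CBF-MBE}, \eqref{safe}, and \eqref{PPP}, which is a correct reading of the (loosely worded) theorem statement. Where you go beyond the paper is in the convergence claim. The paper treats convergence as immediate once only $\boldsymbol{u}_{nom}$ is active -- it is inherited from the standing stability properties of CLF/MPC (Proposition \ref{preposition2} and the cited literature) -- and it does \emph{not} address, inside this proof, the issue you correctly flag: that the descent toward $\boldsymbol{x}_d$ may cross a buffer zone $\mathcal{B}_{t,k}$, reactivate \eqref{safe-controllerb}, and interrupt the Lyapunov decay. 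In the paper's decomposition that question is deliberately postponed: Theorem 3 shows the state cannot linger in $[c_k, c_k+d_k]$, Theorem 4 handles $[0,c_k)$ and safety of \eqref{all-controller}, and Theorem 5 assembles these (together with the assumed existence of a viable path and the backup controller $\boldsymbol{u}_{b,s}^{k}$) into the global reach-the-goal claim. Your second paragraph reconstructs the comparison-lemma content of Proposition \ref{preposition2}, and your third paragraph anticipates precisely the architecture of Theorem 5; so your ``hardest step'' is real mathematics, but it lies outside the scope this theorem is given in the paper, where it is meant only as the instantaneous mode-identification statement. Both readings are defensible: the paper's buys a clean modular proof chain (Theorems 2--5), while yours is more self-contained and more honest about the chattering/termination-of-switching question, which even Theorem 5's proof settles only by appeal to Theorems 3--4 and cited MPC/CLF stability results rather than by a rigorous hybrid-systems argument.
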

\begin{proof}
    From \eqref{controller}, it is evident that when the aforementioned conditions are met, $\rho(z(\boldsymbol{x}) )=1$, resulting in the sole activation of $\boldsymbol{u}_{nom}^{k}$. At this point, the system ignores all obstacles, dedicating its efforts to achieving the goal-seeking objective. 
\end{proof}
\begin{Theorem}
    When the distance between the system and the MEB meets the safety criteria $l_{t,k}+R_{saf}+r$ and exceeds it within the range $[c_{k},c_{k}+d_{k}]$, the system will not remain in this region indefinitely.
\end{Theorem}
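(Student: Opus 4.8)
The plan is to argue by contradiction with an invariance-principle (LaSalle-type) argument, mirroring the structure used for the asymptotic-stability claim of Theorem~1(ii) but with the goal-seeking region playing the role of the target set. Fix the $k$-th MBE. The stated condition that the distance exceeds the safety criterion within $[c_k,c_k+d_k]$ is exactly $\hat h_{t,k}=h_{t,k}-c_k\in[0,d_k]$, so the state lies in the goal-seeking portion of the buffer set $\mathcal B_{t,k}$ of \eqref{buffer}. By \eqref{PPP} the blend weight there is $\rho_t(\hat h_{t,k})=p(\hat h_{t,k})$ with $p(0)=0$, $p(d_k)=1$, and $p(\hat h_{t,k})\in(0,1)$ on the interior; call this region $\mathcal R_k$.

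First I would record the closed-loop field on $\mathcal R_k$. Substituting \eqref{controller} into \eqref{model} gives $\dot{\boldsymbol x}=f(\boldsymbol x)+g(\boldsymbol x)[(1-p)\boldsymbol u_{s}^{k}+p\,\boldsymbol u_{nom}]$, which is locally Lipschitz by the remark following \eqref{safe-controller}. The key structural fact is that $p(\hat h_{t,k})>0$ strictly on the interior of $\mathcal R_k$, so the goal-seeking input $\boldsymbol u_{nom}$ — which obeys the CLF descent \eqref{my_equ7} — always carries nonzero weight there, while the safety input $\boldsymbol u_{s}^{k}$ of \eqref{safe-controllerb} only prevents $\hat h_{t,k}$ from dropping through $0$, since at $\hat h_{t,k}=0$ the class-$\mathcal K$ property gives $\alpha(0)=0$ and hence $\dot{\hat h}_{t,k}\geq 0$ (forward invariance of $\hat{\mathcal S}_{t,k}$).

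Next, suppose for contradiction that the trajectory never leaves $\mathcal R_k$. Because $\boldsymbol x$ is bounded and $\mathcal B_{t,k}$ is compact, the trajectory is precompact and possesses a nonempty, compact, invariant $\omega$-limit set $\Omega\subseteq\mathcal R_k$. Applying LaSalle's invariance principle with the CLF $V$ of \eqref{my_equ7}, $\Omega$ must lie in the largest invariant set on which the goal-seeking descent stalls, i.e. on which the safety deflection exactly cancels the CLF descent so that progress toward $\boldsymbol x_d$ is sustained at zero. Mirroring the non-loitering hypothesis of Theorem~1(ii) (no solution remains consistently in the zero-$\dot h$ set) together with the standing assumption that at least one safe trajectory to the goal exists, such a persistent invariant set is excluded. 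This yields the contradiction, so the trajectory must leave $\mathcal R_k$; since \eqref{safe-controllerb} forbids exiting downward through $\hat h_{t,k}=0$, it must cross $\hat h_{t,k}=d_k$ upward into the pure goal-seeking mode where Theorem~3 applies.

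The hard part will be this third step: rigorously excluding a spurious equilibrium or a non-trivial limit cycle of the blended field inside $\mathcal R_k$. The delicate point is that the convex combination of $\boldsymbol u_{s}^{k}$ and $\boldsymbol u_{nom}$ can in principle produce a balance configuration where the goal-seeking descent is exactly offset by the safety push, and the proof must show every such configuration is non-persistent. This is precisely the loitering/oscillation pathology illustrated in Fig.~\ref{fig:improved_subfig1}(a) that motivates the backup safety controller of Fig.~\ref{fig:improved_subfig1}(b); for a single convex MBE the annular geometry of $\mathcal B_{t,k}$ lets $\boldsymbol u_{s}^{k}$ steer the state tangentially around the obstacle so that no balance point persists, but making this geometric intuition precise — and thereby justifying the LaSalle reduction above — is where most of the work lies.
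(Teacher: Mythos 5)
Your proposal has a genuine gap, and it is one you name yourself: the step that carries the entire content of the theorem --- excluding persistent equilibria or limit cycles of the blended field inside the ring --- is explicitly deferred as ``where most of the work lies.'' Worse, the LaSalle scaffolding you erect around that missing step is unsound in this setting. LaSalle's invariance principle requires a function that is non-increasing along the closed-loop trajectories of the region in question; you propose the CLF $V$ of \eqref{my_equ7}, but inside the blend \eqref{controller} the safety component $\boldsymbol{u}_{s}^{k}$ is only required to satisfy the CBF constraint \eqref{safe-controllerb} and the input bound \eqref{safe-controllerba} --- it obeys no CLF decrease condition --- so $\dot{V}$ can be positive throughout the region, and the claim that the $\omega$-limit set lies where ``descent stalls'' has no basis. (If $\boldsymbol{u}_{nom}$ comes from MPC rather than CLF, there is no $V$ to invoke at all.) Your appeal to the hypothesis of Theorem~1(ii) is also misplaced: that hypothesis excludes solutions staying in $\Psi = \{\boldsymbol{x} \in \mathcal{D}\setminus\mathcal{S} : \dot{h}(\boldsymbol{x})=0\}$, which lives in the inner ring $\hat{h}_{t,k} \in [-c_k,0)$, whereas Theorem~3 concerns the outer ring $\hat{h}_{t,k}\in[0,d_k]$; nothing in the paper lets you ``mirror'' it across the boundary $\hat{h}_{t,k}=0$.

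The paper's own proof takes a different and far more elementary route: an equilibrium-exclusion argument, with no invariance principle. It notes that for the system to cease moving in $[c_k, c_k+d_k]$, the blended input \eqref{controller} must vanish, which forces the algebraic relation $\rho_t(\hat{h}_{t,k})(\boldsymbol{u}_{s}^{k}-\boldsymbol{u}_{nom}^{k}) = \boldsymbol{u}_{s}^{k}$; it then argues this relation is incompatible with $\boldsymbol{u}_{s}^{k}$ being, by construction \eqref{safe-controller}, the minimal deviation from $\boldsymbol{u}_{nom}^{k}$ under the constraint \eqref{safe-controllerba}, and concludes the state must either descend into $[0,c_k)$ or return to goal-seeking mode. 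To be fair, your instinct that a complete argument must also rule out non-stationary invariant behavior (orbiting in the annulus) identifies a real limitation that the paper's short proof shares; but your proposal does not close that hole either --- it relocates it into an inapplicable LaSalle framework. The constructive fix is to adopt the paper's direct stationarity contradiction and state the exclusion of non-stationary invariant motion as an explicit assumption or a separate lemma, rather than deriving it from Theorem~1(ii).
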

\begin{proof}
    Upon entering the aforementioned region, if the system ceases movement within the buffer zone $[c_{k},c_{k}+d_{k}]$, it necessitates fulfilling the condition $\rho_t(\hat{h}_{t,k})(\boldsymbol{u}_{s}^{k}-\boldsymbol{u}_{nom}^{k}) =  \boldsymbol{u}_{s}^{k}$ according to \eqref{controller}, implying that the coefficients of $\boldsymbol{u}_{s}^{k}$ and $\boldsymbol{u}_{nom}^{k}$ are directly inverse and proportional to each other. However,  Given that \eqref{safe-controller} aims to minimally alter $\boldsymbol{u}_{nom}^{k}$ for safety and with $\boldsymbol{u}_{s}^{k}$ limited by \eqref{safe-controllerba}, the inverse relationship of these is not satisfied. As a result, there are two possible outcomes: a) if partial safety is not enough for system safety, the system moves into the buffer zone $[0,c_{k})$, or b) the system switches back to goal-seeking mode. Therefore, the states are not lingering within the buffer zone $[c_{k},c_{k}+d_{k}]$.  
\end{proof}
 
However, relying solely on a one-step prediction like \eqref{safe-controller} can lead to the system's state oscillating in and out of the buffer zones $[-c_{k},0)$ and $[0,d_{k}]$ with altering its trajectory small as shown in Fig. \ref{fig:improved_subfig1}(a). To mitigate this, we designed a backup controller $\boldsymbol{u}_{b,s}^{k}$ based on \eqref{mpc},  adapting the discrete form of \eqref{Z-CBF-MBE} into a format similar to \eqref{mpc_e}. This multi-step prediction modifies trajectories that would otherwise remain small changed. When the system's position is at $\hat{h}_{k,i} = 0$ and the distance of the last state and the current state is less than $d_{bf}$, i.e., $\left\|z(\boldsymbol{x}_{cur})-z(\boldsymbol{x}_{last})  \right\| \leq d_{bf}$,  $\boldsymbol{u}_{b,s}^{k}$ is utilized as illustrated in Fig. \ref{fig:improved_subfig1}(b). This highlights the advantage of \eqref{Z-CBF-MBE} being adaptable to both continuous and discrete controllers.

Ultimately, the controller designed to ensure safety across all MSEs is presented as follows:
\begin{equation} \label{all-controller}
\boldsymbol{u}^{*}=(1-\bar{\rho}_t) \bar{\boldsymbol{u}}_{s}+\bar{\rho}_t \boldsymbol{u}_{nom}
\end{equation}
where $\bar{\rho}_t = \rho_t(\hat{h}_{t,k})$ if $z(\boldsymbol{x}_{t}) \in \mathcal{B}_{t,k}$, $k = 1, 2, \ldots, N_{obs}$, and $\bar{\rho}_t = 1$ otherwise. $\bar{\boldsymbol{u}}_{s} = \boldsymbol{u}_{s}^{k}$ if $z(\boldsymbol{x}_{t}) \in \mathcal{B}_{t,k}$, $\left\|z(\boldsymbol{x}_{cur})-z(\boldsymbol{x}_{last})  \right\| > d_{bf}$, $k = 1, 2, \ldots, N_{obs}$, and $\bar{\boldsymbol{u}}_{s} = \boldsymbol{u}_{b,s}^{k}$ if $z(\boldsymbol{x}_{t}) \in \mathcal{B}_{t,k}$, $\left\|z(\boldsymbol{x}_{cur})-z(\boldsymbol{x}_{last})  \right\| \leq d_{bf}$, $k = 1, 2, \ldots, N_{obs}$, and $\bar{\boldsymbol{u}}_{s} = 0$ otherwise. 
\begin{Theorem}
    Consider $N_{obs}$ Type-II ZCBFs with discrete and continuous form $\hat{h}_{t,k} : \mathcal{X} \rightarrow \mathbb{R}$ for each $k=1,2,\ldots,N_{obs} \in \mathcal{N}_{obs}$ at time instant $t$. For any distinct $k_{i}$, $k_{j} \in \mathcal{N}_{obs}$, the sets $\mathcal{B}_{t,k_{i}}$ and $\mathcal{B}_{t,k_{j}}$ are disjoint $(\mathcal{B}_{t,k_{i}} \cap \mathcal{B}_{t,k_{j}} = \emptyset)$. With the locally Lipschitz continuous control input $\boldsymbol{u}_{nom} \in \mathcal{U}$, if each Type-II ZCBF $\hat{h}_{t,k}$ is associated with a safety control input $\boldsymbol{u}_{s}^{k} \in \mathcal{U}$ and a backup safety controller $\boldsymbol{u}_{b,s}^{k}\in \mathcal{U}$, then the proposed control input $\boldsymbol{u}^{*}$, as in \eqref{all-controller} for system \eqref{model}, ensures that  

    (i) If \( z(\boldsymbol{x}_0) \) belongs to the intersection of all \( \mathcal{\hat{S}}_{t,k} \) for \( k \in \mathcal{N}_{obs} \), the system \eqref{model} is safe during the whole process.
    
    (ii) If the system's distance from the MEB satisfies and surpasses the safety threshold $l_{t,k}+R_{saf}+r$  within the interval $[0,c_{k})$, it will not persist an extended period.
    
    (iii) If the set \( \mathcal{U} \) is convex, then \( \boldsymbol{u}^* \) is within \( \mathcal{U} \) for all \( z(\boldsymbol{x}) \in \mathcal{X} \).
\end{Theorem}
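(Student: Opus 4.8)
The plan is to prove the three claims separately, in each case exploiting the hypothesis that the buffer zones $\{\mathcal{B}_{t,k}\}_k$ are pairwise disjoint so that the multi-obstacle controller \eqref{all-controller} collapses, at every instant, to the single-obstacle controller \eqref{controller} already analysed in Theorems 2 and 3 and in \cite{cortez2021robust}. Disjointness is the structural fact that makes this reduction possible: since $z(\boldsymbol{x}_t)$ lies in at most one $\mathcal{B}_{t,k}$ at a time, the aggregated weight $\bar{\rho}_t$ and the aggregated safety input $\bar{\boldsymbol{u}}_s$ are determined by a single index $k$, and no two D-ZCBF constraints ever compete for the control authority.

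For (i), I would first note that $\mathcal{S}\subseteq\bigcap_k\hat{\mathcal{S}}_{t,k}$ by \eqref{safe}, so it suffices to keep every $\hat{h}_{t,k}$ nonnegative, and I would run a first-violation-time argument. Suppose some $\hat{h}_{t,k}$ became negative and let $\tau$ be the first instant at which any index reaches its boundary, so $\hat{h}_{\tau,k}=0$ for that $k$ while $\hat{h}_{\tau,j}\ge 0$ for all $j$. Since $0\in[-c_k,d_k]$, the state then lies in $\mathcal{B}_{t,k}$ and, by disjointness, in no other buffer zone, so \eqref{all-controller} reduces at $\tau$ to \eqref{controller} with $\bar{\boldsymbol{u}}_s\in\{\boldsymbol{u}_s^k,\boldsymbol{u}_{b,s}^k\}$. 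The active safety input satisfies the D-ZCBF constraint \eqref{safe-controllerb}, giving $\dot{\hat{h}}_{t,k}\ge-\alpha(\hat{h}_{t,k})=0$ at $\hat{h}_{t,k}=0$; by the comparison argument behind Theorem 1 this forbids a downward crossing, contradicting the assumed violation. Hence the membership $z(\boldsymbol{x}_0)\in\bigcap_k\hat{\mathcal{S}}_{t,k}$ is preserved for all $t$, which establishes safety.

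For (ii), I would mirror the contradiction argument of Theorem 3, now applied to the inner band $[0,c_k)$ and invoking the backup controller. Suppose the state lingered there; then successive logged positions satisfy $\|z(\boldsymbol{x}_{cur})-z(\boldsymbol{x}_{last})\|\le d_{bf}$, which by the switching rule in \eqref{all-controller} activates $\boldsymbol{u}_{b,s}^k$. Because $\boldsymbol{u}_{b,s}^k$ is built from the multi-step prediction \eqref{mpc}, it produces a trajectory that is not merely a minimal one-step correction of $\boldsymbol{u}_{nom}^k$, so the small-displacement condition cannot be sustained and the state is driven either back across $\hat{h}_{t,k}=0$ into goal-seeking or deeper into a different regime, contradicting indefinite persistence. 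I expect this to be the main obstacle, because \emph{will not persist an extended period} is a liveness property whose rigorous proof needs a quantitative lower bound on the per-horizon progress of $\boldsymbol{u}_{b,s}^k$ that the qualitative MPC setup here only supplies heuristically; I would make it precise by estimating this descent explicitly.

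For (iii), the argument is short. By \eqref{PPP} we have $\bar{\rho}_t\in[0,1]$, while $\boldsymbol{u}_s^k\in\mathcal{U}$ is enforced by \eqref{safe-controllerba}, $\boldsymbol{u}_{b,s}^k\in\mathcal{U}$ holds by hypothesis, and $\boldsymbol{u}_{nom}\in\mathcal{U}$. Whenever $\bar{\boldsymbol{u}}_s=0$ we are in the case $\bar{\rho}_t=1$, so \eqref{all-controller} gives $\boldsymbol{u}^*=\boldsymbol{u}_{nom}\in\mathcal{U}$; otherwise $\boldsymbol{u}^*=(1-\bar{\rho}_t)\bar{\boldsymbol{u}}_s+\bar{\rho}_t\boldsymbol{u}_{nom}$ is a convex combination of two points of $\mathcal{U}$ with weights in $[0,1]$ summing to one, hence lies in $\mathcal{U}$ by convexity. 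This holds for every $z(\boldsymbol{x})\in\mathcal{X}$, completing the claim.
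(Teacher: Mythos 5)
Your parts (i) and (iii) are sound. Part (iii) is exactly the paper's argument: a convex combination of $\bar{\boldsymbol{u}}_{s}\in\mathcal{U}$ and $\boldsymbol{u}_{nom}\in\mathcal{U}$ with weights $\bar{\rho}_t\in[0,1]$ stays in the convex set $\mathcal{U}$. Part (i) is correct but takes a more self-contained route than the paper: you run a first-crossing (Nagumo-type) argument at $\hat{h}_{t,k}=0$, using that $\rho_t(0)=p(0)=0$ in \eqref{PPP} forces the pure safety input on the boundary, whereas the paper simply hands the conclusion to Theorem 1 of \cite{cortez2021robust}. Your version has the merit of making explicit why only the boundary behavior matters (the blended input need not satisfy \eqref{safe-controllerb} for $\hat{h}_{t,k}\in(0,d_k]$), and it gives forward invariance of $\cap_k\hat{\mathcal{S}}_{t,k}$ directly; the paper's route additionally covers recovery after excursions below $\hat{h}_{t,k}=0$, which is what the Type-II construction is designed for.

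Part (ii) has a genuine gap, and it is not only the one you flagged. You try to derive non-persistence in the band $\hat{h}_{t,k}\in[-c_k,0)$ from the backup controller $\boldsymbol{u}_{b,s}^{k}$, but the switching rule (Algorithm 2 and the text preceding \eqref{all-controller}) logs positions, and hence tests $\left\|z(\boldsymbol{x}_{cur})-z(\boldsymbol{x}_{last})\right\|\leq d_{bf}$, only at states where $\hat{h}_{t,k}=0$. A trajectory that stalls strictly inside the band never touches $\hat{h}_{t,k}=0$, never logs a position, and therefore never activates $\boldsymbol{u}_{b,s}^{k}$: the mechanism you invoke does not engage in precisely the regime that claim (ii) is about. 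On top of that, as you yourself note, even when the backup does engage you have no quantitative per-horizon progress bound for the MPC scheme, so the contradiction cannot be closed.

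The paper's mechanism is different and avoids both problems. In the band $\hat{h}_{t,k}\in[-c_k,0)$ we have $\bar{\rho}_t=0$ by \eqref{PPP}, so $\boldsymbol{u}^{*}=\bar{\boldsymbol{u}}_{s}$ satisfies the constraint \eqref{safe-controllerb}; since a Type-II ZCBF uses $\alpha$ with $\alpha(\hat{h}_{t,k})\leq 0$ for $\hat{h}_{t,k}<0$, this yields $\dot{\hat{h}}_{t,k}\geq-\alpha(\hat{h}_{t,k})\geq 0$ throughout the band, and (with $\alpha$ chosen so that equality is only possible at $\hat{h}_{t,k}=0$, which is how the paper justifies that no solution remains in $\Psi_{k,i}=\{\boldsymbol{x}\in(\hat{\mathcal{S}}_{t,k}\cup\mathcal{B}_{t,k})\setminus\hat{\mathcal{S}}_{t,k}:\dot{\hat{h}}_{t,k}=0\}$) the LaSalle-type result in Theorem 1(ii) gives asymptotic stability of $\hat{\mathcal{S}}_{t,k}$. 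The state is therefore driven out of the band, which is exactly claim (ii), with no descent estimate for $\boldsymbol{u}_{b,s}^{k}$ needed at all: the quantitative bound you thought was missing becomes unnecessary once you notice the sign structure of $\alpha$ below zero and that $\bar{\rho}_t=0$ there, and then delegate the liveness conclusion to Theorem 1 rather than to the backup controller.
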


\begin{proof}
 Since for any distinct $k_{i}$, $k_{j} \in \mathcal{N}_{obs}$, the intersection $\mathcal{B}_{t,k_{i}}$ and $\mathcal{B}_{t,k_{j}}$ is empty, there are only two possibilities for each buffer zone $\mathcal{B}_{t,k}$:  either there exists a unique $k$ for which $z(\boldsymbol{x}) \in \mathcal{B}_{t,k}$ or $z(\boldsymbol{x}) \notin \mathcal{B}_{t,k}$. Firstly, in the case where $z(\boldsymbol{x})$ does not belong to any given  $\mathcal{\hat{B}}_{k,i}$, i.e., each $\bar{\rho}_{t} = 1$, $\boldsymbol{u}^{*} = \boldsymbol{u}_{nom}$ is well-defined and locally Lipschitz continuous to complete the task. 
 
 Secondly, in the scenario where $\hat{h}_{t,k}$ falls within the range $[-c_{k}, 0)$, $\bar{\rho}_{t}$ is set to zero and $\dot{\hat{h}}_{t,k} = 0 $ becomes active solely at $\hat{h}_{t,k}=0$, resulting in no solution persisting consistently within the set $\Psi_{k,i} := \{\boldsymbol{x} \in (\mathcal{\hat{S}}_{t,k} \cup \mathcal{B}_{t,k} \setminus \mathcal{\hat{S}}_{t,k}): \dot{\hat{h}}_{t,k} = 0\}$.  According to Theorem 1, the safe set $\mathcal{\hat{S}}_{t,k}$ achieves asymptotic stability, thereby continuously ensuring real safety of $\mathcal{S} \subseteq \cap_{k=1}^{N_{obs}} \hat{\mathcal{S}}_{t,k}$. Similarly since $\boldsymbol{u}_{s}^{k}$ and $\boldsymbol{u}_{b,s}^{k}$ is also well-defined and locally Lipschitz continuous, $\boldsymbol{u}^{*}$ is well-defined.
 
 Finally, we analyze the case where $-c_{k} \leq \hat{h}_{t,k} \leq d_{k}$. Given that for any \( z(\boldsymbol{x}) \in \mathcal{X} \), $\boldsymbol{u}^{*}$ is formulated as a convex combination of $\bar{\boldsymbol{u}}_{s}$ and $\boldsymbol{u}_{nom}$, both of which belong to the convex set $\mathcal{U}$, it follows that $\boldsymbol{u}^{*}$ is locally Lipschitz continuous and is also an element of $\mathcal{U}$.  
\end{proof}

Theorem 4 confirms the system's safety when governed by the controller outlined in \eqref{all-controller}. Given that the primary objective within the buffer zone $\mathcal{B}$ is to activate the corresponding Type II D-ZCBF and ensure system safety, this segment focuses on exploring safe paths rather than goal-seeking. Thus, the operational state of the system within this buffer zone is described as being in exploration mode.
\begin{table}[ht]
\caption{Comparison of average solving time[Unit: \textcolor{red}{second}]}
\centering
\resizebox{0.48\textwidth}{!}{%
\begin{tabular}{p{2.7cm}|c|c|c|c}
 \toprule
    &1 obs &5 obs &10 obs &20 obs\\
 \hline
 MPC-Base  &0.0141   &0.0368   &0.1101 & 0.314\\
 MPC-All  &0.0743 & 0.1098 &0.197 & 0.295\\
 NMPC-CBF  &0.138 & 0.1432 &0.134 & 0.138\\
 MPC-Zone$^{*}$  &0.0465 & 0.06 & 0.0495 & 0.058\\
 MPC-CBF-Zone$^{*}$  &\textbf{0.00947} & \textbf{0.013} & \textbf{0.0112} & \textbf{0.0127}\\
 \bottomrule
\end{tabular}%
}
\label{tab: shape_control}
\end{table}
\begin{figure}[!t]
  \centering
  \begin{tikzpicture}
    \begin{scope}[xshift=-3.5cm, yshift=0cm ]
        \node[above right] (fig63) at (-2,0){\includegraphics[ height=0.62\linewidth]{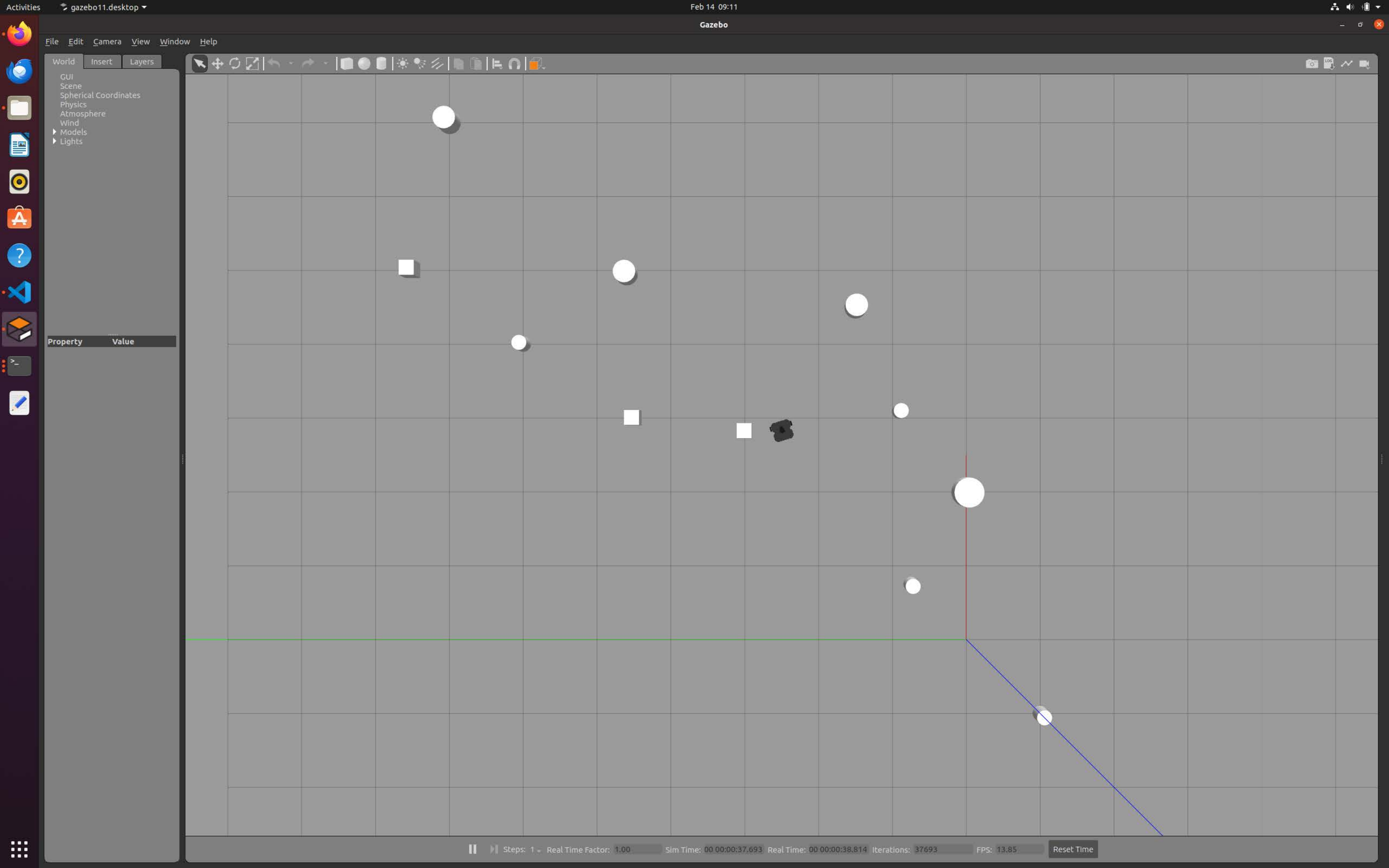}};
         \draw[red, dashed, line width=1pt] (-0.7,4.7) rectangle (-0,5.4);
         \draw[red, dashed, line width=1pt] (1.8,3.2) rectangle (1.1,3.9);
         \draw[red, dashed, line width=1pt] (3.3,1.5) rectangle (2.5,2.2);
         \draw[red, dashed, line width=1pt] (4.6,1.8) rectangle (3.9,2.5);
         \fill[red] (-1.4,2) circle (0.2);
         \fill[gray] (-1.4,1.3) circle (0.2);
         \fill[green] (-1.4,0.6) circle (0.2);
         \node[inner sep=1pt, font=\footnotesize] at (-0.2,2) {Moving MEBs};
         \node[inner sep=1pt, font=\footnotesize] at (-0.05,2-0.7) {Prediction MEBs};
         \node[inner sep=1pt, font=\footnotesize] at (-0.33,2-1.4) {Static MEBs};
         \draw[red, dashed, line width=1pt] (1.6,0.4) rectangle (1.1,0.9);
         \node[inner sep=1pt, font=\footnotesize] at (2.8,2-1.42) {Moving obstacles};
    \end{scope}
    \draw[-{Latex[length=3mm, width=2mm]}, red, line width=1pt] (-0.2,2.2) -- (1.9,3.3);
    \begin{scope}[xshift=1cm, yshift=0cm]
        \node[above right] (fig64) at (-0.66,3.2){\includegraphics[ height=0.25\linewidth]{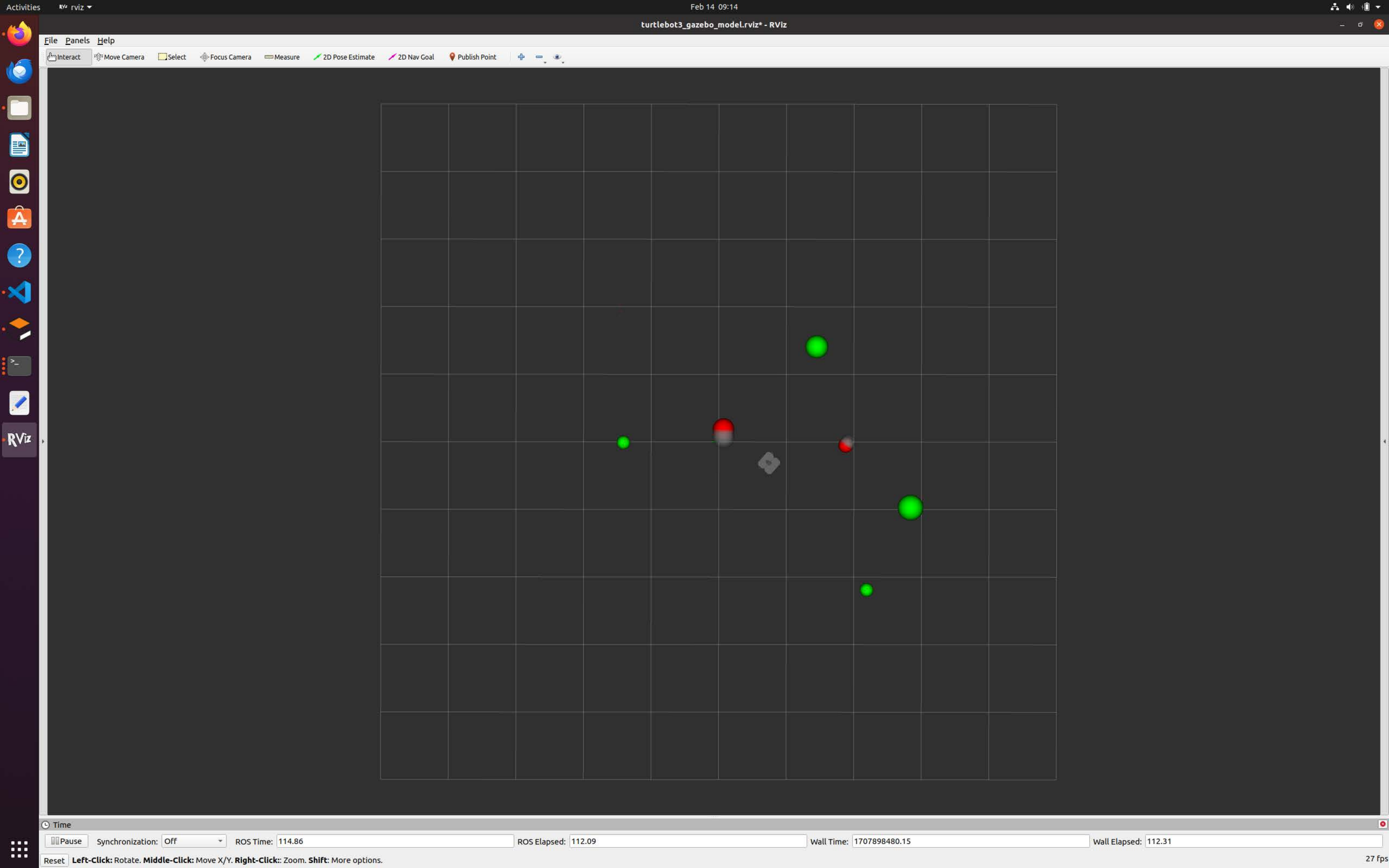}};
    \end{scope}
  \end{tikzpicture}
  \caption{Simulation setups and perception results for moving obstacle scenarios.}
  \label{fig6}
  \vspace{-1em}
\end{figure}
\begin{Theorem}
When the goal lies beyond the buffer and unsafe zones, and at least one viable path to the destination exists, our framework, utilizing \eqref{all-controller}, ultimately directs the system towards achieving the goal.
\end{Theorem}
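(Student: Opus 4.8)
The plan is to prove global goal attainment by a mode-switching argument that stitches together the local guarantees already established, using the nominal controller's Lyapunov decrease as the progress certificate and the non-persistence results of the earlier theorems to rule out permanent trapping. First I would partition the closed-loop trajectory into the two regimes induced by the aggregate coefficient $\bar{\rho}_t$ in \eqref{all-controller}: the \emph{goal-seeking} regime, where $z(\boldsymbol{x}_t)$ lies outside every buffer set $\mathcal{B}_{t,k}$, so that $\bar{\rho}_t = 1$ and $\boldsymbol{u}^{*} = \boldsymbol{u}_{nom}$; and the \emph{exploration} regime, where by the disjointness hypothesis of Theorem~4 the state lies in exactly one $\mathcal{B}_{t,k}$. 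In the goal-seeking regime the closed loop reduces to the pure CLF/MPC tracking law, so by the CLF conditions \eqref{CLF1} and \eqref{CLF2} (equivalently the stage/terminal-cost design of \eqref{mpc}) together with Proposition~\ref{preposition2} the value $V(\boldsymbol{x}_e)$ decreases strictly, and Theorem~2 already certifies convergence to $\boldsymbol{x}_d$ whenever the state remains clear of all buffers.

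Next I would show that each exploration episode is exited in finite time. For the partial-braking shell $\hat{h}_{t,k}\in[0,d_k]$ this is precisely Theorem~3, and for the full-safety shell $\hat{h}_{t,k}\in[-c_k,0)$ it is Theorem~4(ii); together they guarantee the state cannot linger in any single $\mathcal{B}_{t,k}$ indefinitely, while Theorem~4(i) keeps the safe set $\mathcal{S}\subseteq\cap_{k=1}^{N_{obs}}\hat{\mathcal{S}}_{t,k}$ forward invariant throughout. The delicate point is the chattering of Fig.~\ref{fig:improved_subfig1}(a), where the one-step law \eqref{safe-controller} alone could oscillate the state across a boundary $\hat{h}_{t,k}=0$ with no net displacement. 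Here I would invoke the backup controller $\boldsymbol{u}_{b,s}^{k}$: its activation test $\|z(\boldsymbol{x}_{cur})-z(\boldsymbol{x}_{last})\|\le d_{bf}$ detects exactly this absence of progress, and its multi-step prediction based on \eqref{mpc} deflects the robot along the obstacle, so each exploration episode terminates by returning the state to the goal-seeking regime at a strictly different re-entry point.

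Finally I would close the convergence argument by combining the progress certificate with the finiteness of the obstacle set. Because there are only $N_{obs}$ obstacles and every exploration episode returns the state to the goal-seeking regime where $V(\boldsymbol{x}_e)$ resumes its strict decrease, the values of $V$ taken at successive re-entries form a nonincreasing, lower-bounded sequence; the hypothesis that at least one collision-free path to $\boldsymbol{x}_d$ exists keeps the backup MPC in \eqref{mpc} recursively feasible, so the infimum of $V$ is achieved at $\boldsymbol{x}_e=0$ rather than at a spurious deadlock. A LaSalle-type argument, applied with the set $\Psi$ of Theorem~1 and the fact that no closed-loop solution remains in $\mathcal{D}\setminus\mathcal{S}$, then forces $\boldsymbol{x}_e\to 0$, establishing the claim.

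The hard part will be the third step: rigorously excluding livelock, that is, proving the alternation between goal-seeking and exploration cannot persist forever with $V$ stalling at a positive value. This demands quantifying the net decrease in $V$ contributed by each backup-controller excursion and showing it is bounded away from zero (or that such excursions can occur only finitely often), which is exactly where the viable-path assumption and the finite horizon $N$ of the backup MPC must do the heavy lifting; without a uniform lower bound on per-episode progress one cannot rule out a Zeno-like accumulation of ever-smaller deflections.
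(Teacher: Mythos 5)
Your first two steps are, in essence, the paper's entire proof: the paper invokes Theorems 3 and 4 to conclude that the system ``will not perpetually remain within the buffer zones $\mathcal{B}$,'' then observes that the time to reach the goal is governed by $\boldsymbol{u}_{nom}$ and appeals to the asymptotic stability of MPC and CLF as established in the cited literature, and stops there. The paper attempts nothing resembling your third step --- there is no bookkeeping of $V$ across exploration episodes, no LaSalle argument using $\Psi$, and no recursive-feasibility claim for the backup MPC. So on the shared portion you match the paper exactly.

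The trouble is in the extra material, and it is a real gap --- one your own closing paragraph half-admits. The claim that ``the values of $V$ taken at successive re-entries form a nonincreasing, lower-bounded sequence'' is unjustified: during an exploration episode the control \eqref{all-controller} weights $\bar{\boldsymbol{u}}_{s}$, which places no value on goal progress, so $V(\boldsymbol{x}_e)$ can \emph{increase} while the robot is deflected around an obstacle; monotone decrease holds only inside goal-seeking intervals, and hence $V$ at re-entry $i+1$ may exceed $V$ at re-entry $i$. Without that monotonicity your limit argument does not close, and the livelock scenario you flag --- infinitely many exploration episodes with $V$ stalling at a positive value, or Zeno-like ever-smaller deflections near a buffer boundary (exactly the oscillation of Fig.~\ref{fig:improved_subfig1}(a), which the paper resolves only by the heuristic activation test $\|z(\boldsymbol{x}_{cur})-z(\boldsymbol{x}_{last})\|\leq d_{bf}$ for $\boldsymbol{u}_{b,s}^{k}$) --- remains unexcluded. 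You should know, however, that the paper's proof has precisely the same hole: it silently equates ``does not linger in any single buffer zone'' (Theorems 3 and 4) with ``eventually remains outside all buffer zones so that nominal stability takes over,'' which does not follow without a uniform per-episode progress bound or a finiteness argument extracted from the viable-path hypothesis. In short, you reproduced the paper's reasoning and, in trying to make it rigorous, correctly located the step that neither you nor the paper actually proves.
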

\begin{proof}
    Drawing from Theorems 3 and 4, the system \eqref{model} will not perpetually remain within the buffer zones $\mathcal{B}$. Consequently, the time it takes for the system to reach the goal is dependent on $\boldsymbol{u}_{nom}$, along with the stability of either the MPC or CLF strategies. Fortunately, the asymptotic stability of both MPC and CLF has been extensively researched and verified \cite{faulwasser2015nonlinear,grimm2005model,ames2019control}. As $t$ approaches infinity, the system asymptotically stabilizes at the goal. 
\end{proof}

Finally, we discuss the influence of buffer zone dimensions on system control. Keeping $d_{k}$  small is essential to limit the effect of $\bar{\boldsymbol{u}}_{s}$ on $\boldsymbol{u}_{nom}$, yet setting $d_{k}$ too small may increase the controller's sensitivity to Lidar's measurement noise. Conversely, a larger $c_{k}$ expands the attraction region for $\hat{\mathcal{S}}_{t,k}$, enhancing the system's capability to manage greater disturbances. However, this expansion necessitates a higher level of control authority. Thus, users can select appropriate $c_{k}$ and $d_{k}$ values based on their specific requirements. The proposed framework in this paper can be seamlessly adapted to situations involving overlapping buffer zones, by incorporating appropriate \eqref{safe-controllerb} for these intersections. In this overlapping situation, our method also offers computational efficiency by not requiring all D-ZCBFs. The overall process is depicted in Algorithms 1 and 2.


\section{Simulations and Experiments}
For performance evaluation, simulations featuring TurtleBot3 (TB3) robots were carried out \cite{9981177}, considering scenarios with both dynamic and static obstacles. For static obstacle scenarios, we benchmarked five methods and accounted for buffer overlap scenarios, setting $c_{k} = 0.1$m and $d_{k} = 0.2$m. MPC-Base is the baseline with full prior knowledge of obstacles. MPC-All is the latest ellipses-based approach \cite{10160857}, and NMPC-CBF is the advanced online learning-based method \cite{9981177}. Our method, MPC-Zone, solely utilizes the backup controller $\boldsymbol{u}_{b,s}^{k}$ for safety, whereas MPC-CBF-Zone combines this with the safety controller $\boldsymbol{u}_{s}^{k}$, as shown in \eqref{all-controller}.

Figure. \ref{fig5} presents the goal-directed trajectories generated by different approaches across varying obstacle densities. Specifically, Fig. \ref{fig5}(a)-(b) demonstrate that all evaluated methods effectively navigate scenarios with less than five obstacles. However, Table \ref{tab: shape_control} highlights a limitation in the NMPC-CBF method, where the slow online learning rate of the neural networks prevents timely controller adjustments, resulting in arc-shaped trajectories. Conversely, Table \ref{tab: change_accuracy_rising} showcases the superiority of the MPC-Base method in achieving the quickest goal reaching, attributed to its comprehensive pre-acquisition of obstacle data. When the number of obstacles reaches ten, in Fig. \ref{fig5}(c), MPC-All struggles with delays, taking 0.197s for online obstacle detection and optimization, while NMPC-CBF, aiming for a swift update speed of 0.14s, lacks the comprehensive data needed to accurately model complex environments via a CBF, resulting in obstacle collisions. When the obstacle count increases to twenty, Fig. \ref{fig5}(d) shows that only our method (yellow-brown and green line) successfully
\begin{figure}[!t]
  \centering
  \begin{tikzpicture}
    \draw[black, line width=1.5pt] (-10+0.7,0) -- (-9.5+0.7,0) node[right, text=black, font=\footnotesize]{MPC-Base};
    \draw[navy, line width=1.5pt] (-7.8+0.7,-0) -- (-7.2+0.7,-0) node[right, text=black, font=\footnotesize]{MPC-All};
    \draw[ppurple, line width=1.5pt] (-5.6+0.7,-0) -- (-5.1+0.7,-0) node[right, text=black, font=\footnotesize]{NMPC-CBF};
    \draw[orange, dashed, line width=1pt] (-3.2+0.7,-0) -- (-2.7+0.7,-0) node[right, text=black]{$c_{k}$};
    \draw[custompurple, dashed, line width=1pt] (-10+0.4,-0.5) -- (-9.5+0.4,-0.5) node[right, text=black]{$d_{k}$};
    \draw[darkred, line width=1.5pt] (-8.8+0.4,-0.5) -- (-8.3+0.4,-0.5) node[right, text=black, font=\footnotesize]{MPC-Zone$^{*}$ (Type-II)};
    \draw[darkgreen, line width=1.5pt] (-5.4+0.4,-0.5) -- (-4.9+0.4,-0.5) node[right, text=black, font=\footnotesize]{MPC-CBF-Zone$^{*}$ (Type-II)};
\end{tikzpicture}

  \hspace{-1.5em}
  \begin{tikzpicture}
    \begin{scope}[xshift=-3.5cm, yshift=0cm ]
        \node[above right] (fig63) at (0,0){\includegraphics[ height=0.334\linewidth]{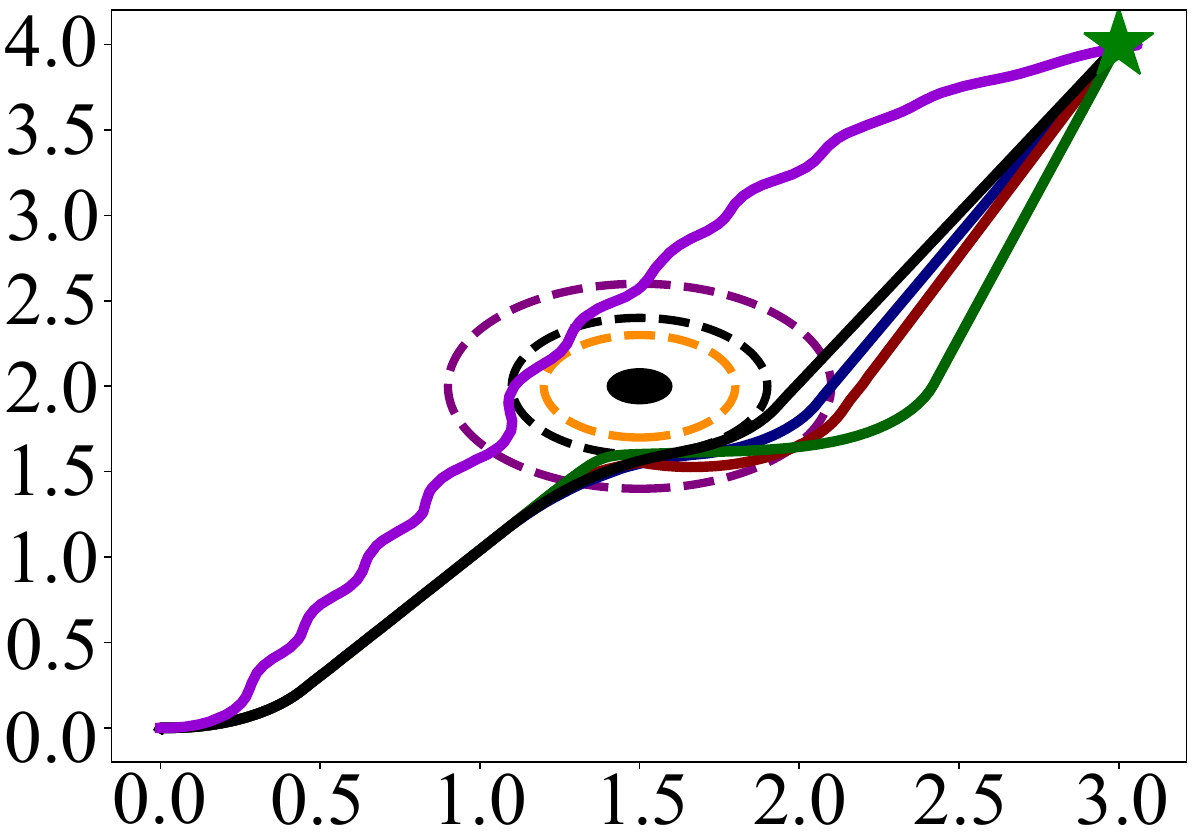}};
        \node at ($(fig63.south)+(0,-0.35)$) {\normalsize (a) one obstacle};
            \node[rotate=90] at ($(fig63.west)+(0,0)$) {\scriptsize y positio [m]};
            \node at ($(fig63.south)+(-0.08,-0.02)$) {\scriptsize x position [m]};
    \end{scope}
    \draw[-{Latex[length=3mm, width=2mm]}, red, line width=1pt] (0.4,2.9) -- (-1,2.7) 
    node[pos=1, left, text=black, font=\footnotesize]{Goal};
    \node[inner sep=1pt, font=\footnotesize] at (0.59,2.63) {$p_1$};
    \begin{scope}[xshift=1cm, yshift=0cm]
        \node[above right] (fig64) at (0,0){\includegraphics[ height=0.334\linewidth]{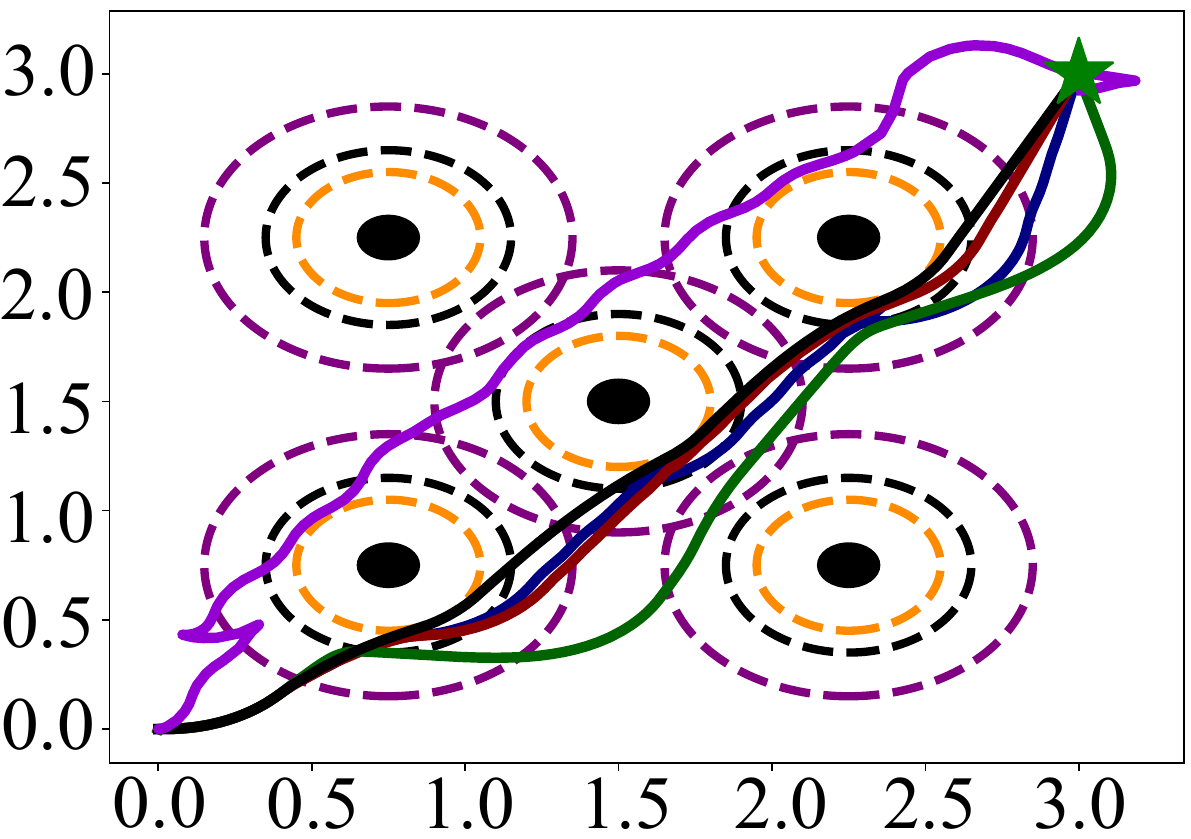}};
        \node at ($(fig64.south)+(0,-0.35)$) {\normalsize (b) five obstacles};
            \node[rotate=90] at ($(fig64.west)+(-0.0,0)$) {\scriptsize y position [m]};
            \node at ($(fig64.south)+(0,-0.02)$) {\scriptsize x position [m]};
    \end{scope}
    \node[inner sep=1pt, font=\footnotesize] at (5.05,2.53) {$p_1$};
    
    \begin{scope}[xshift=-3.5cm, yshift=-3.8cm ]
        \node[above right] (fig63) at (0,0){\includegraphics[ height=0.334\linewidth]{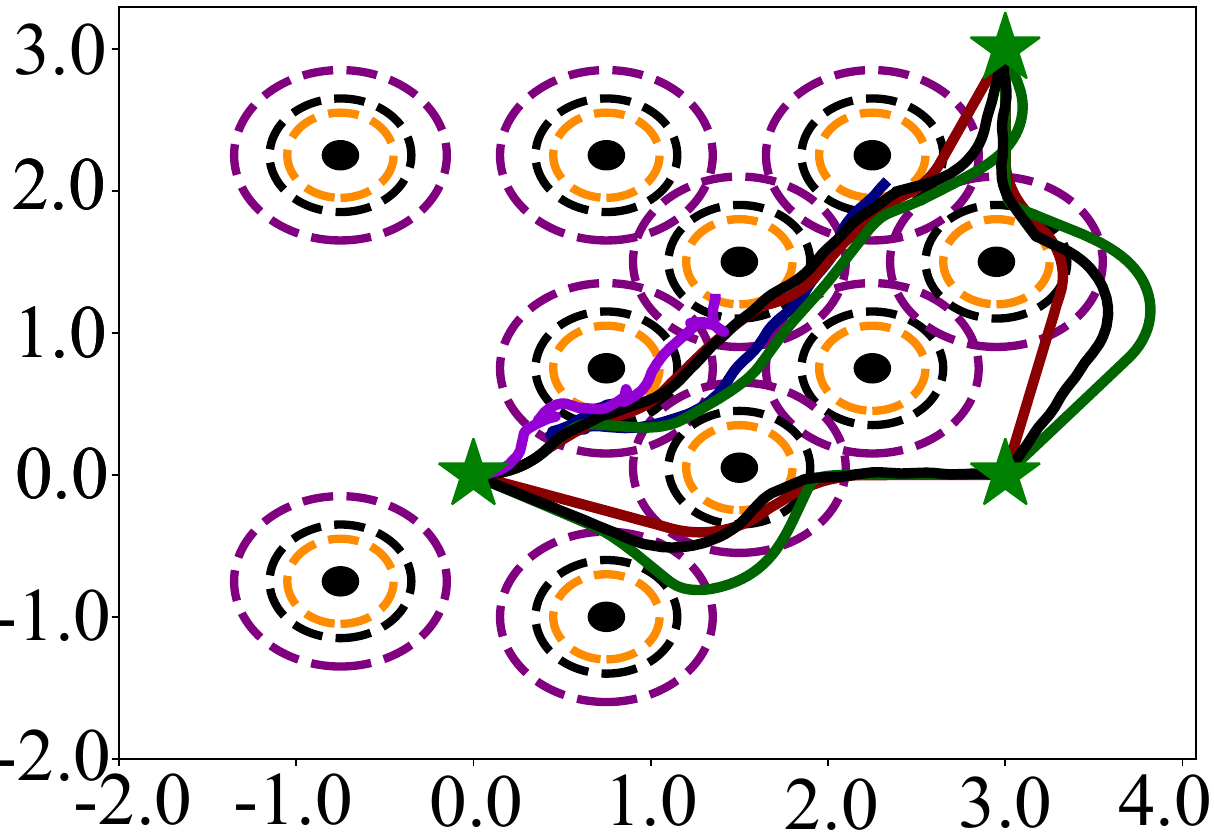}};
        \node at ($(fig63.south)+(0,-0.35)$) {\normalsize (c) ten obstacles};

            \node[rotate=90] at ($(fig63.west)+(0,0)$) {\scriptsize y position [m]};
            \node at ($(fig63.south)+(-0.08,-0.02)$) {\scriptsize x position [m]};
    \end{scope}
    \node[inner sep=1pt, font=\footnotesize] at (0.36,-1.1) {$p_1$};
    \node[inner sep=1pt, font=\footnotesize] at (0.25,-2.65) {$p_2$};
    \node[inner sep=1pt, font=\footnotesize] at (-2,-2.3) {$p_3$};
    \begin{scope}[xshift=1cm, yshift=-3.8cm]
        \node[above right] (fig64) at (0,0){\includegraphics[ height=0.336\linewidth]{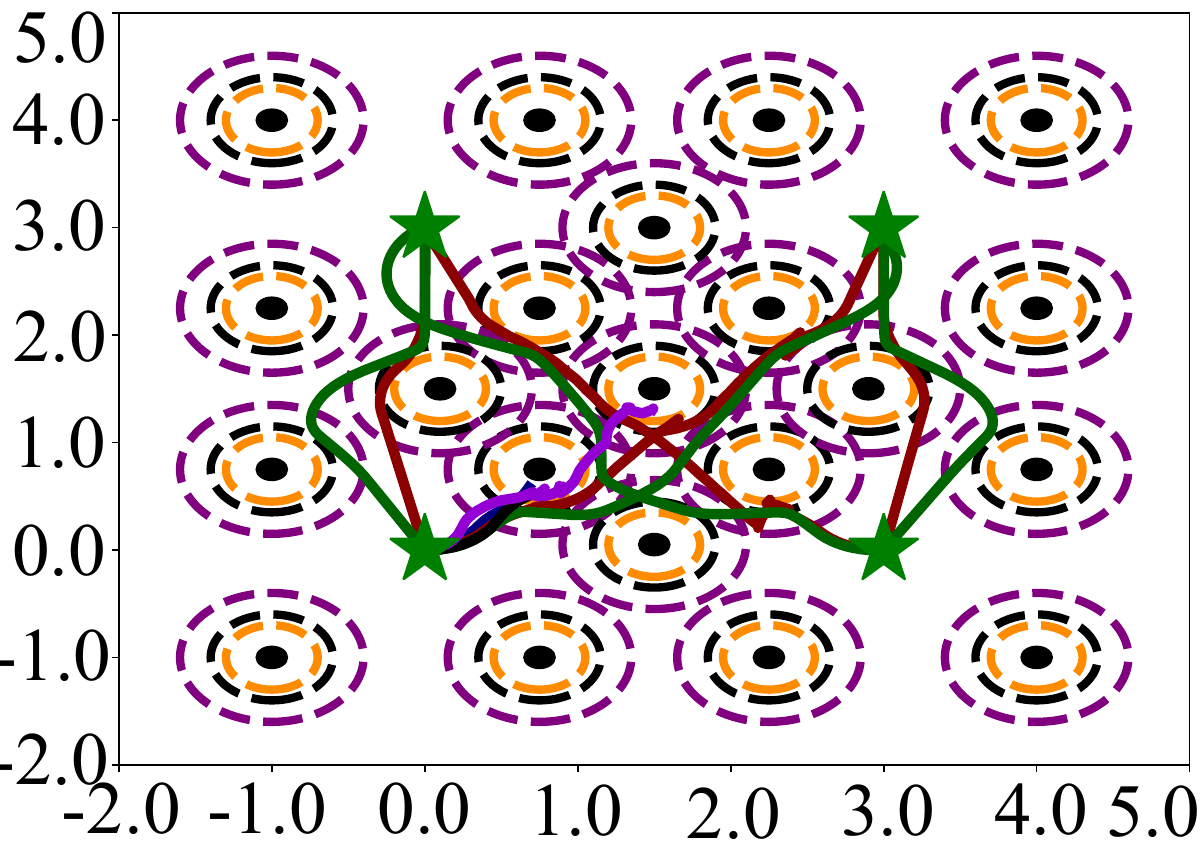}};
        \node at ($(fig64.south)+(0,-0.35)$) {\normalsize (d) twenty obstacles};
            \node[rotate=90] at ($(fig64.west)+(-0.0,0)$) {\scriptsize y position [m]};
            \node at ($(fig64.south)+(0,-0.02)$) {\scriptsize x position [m]};
    \end{scope}
    \node[inner sep=1pt, font=\footnotesize] at (4.3,-1.4) {$p_1$};
    \node[inner sep=1pt, font=\footnotesize] at (4.3,-2.8) {$p_2$};
    \node[inner sep=1pt, font=\footnotesize] at (2.4,-1.42) {$p_3$};
    \node[inner sep=1pt, font=\footnotesize] at (2.4,-2.77) {$p_4$};
  \end{tikzpicture}
  \caption{Comparison of trajectories utilizing various methods under different numbers of static obstacles. The starting point is at the (0, 0) position, with p1, p2, p3, and p4 representing different goals in sequence.}
  \label{fig5}
  \vspace{-1em}
\end{figure}
reaches all four goals. Table \ref{tab: shape_control}  indicates that the MPC-CBF-Zone method maintains a 100Hz update frequency under any obstacle density. Conversely, despite a slower rate around 20Hz, MPC-Zone produces less conservative trajectories than MPC-CBF-Zone due to its multi-step forecasting. Finally, Table \ref{tab: change_accuracy_rising} confirms that our method can successfully reach each goal in sequence, regardless of the number of obstacles. All results collectively demonstrate the superiority of our method in scenarios with multiple static obstacles.

In dynamic obstacle environments, building on our method's proven efficacy in static contexts, we focus solely on the MPC-CBF-Zone technique. Fig. \ref{fig6} presents an analysis with ten obstacles, including spheres and cuboids, of which four are dynamic, showcasing our approach's local perception capability (Algorithm 1) to accurately differentiate between static and dynamic obstacles and effectively predict the trajectories of moving obstacles. The complete trajectory is detailed in Fig. \ref{fig7}(a), where the mobile robot achieves its goal in roughly 57s, maintaining an average optimization time of 0.014s. Concurrently, Fig. \ref{fig7}(b) confirms that both the linear velocity $v$ and angular velocity $w$ adhere to the prescribed constraints throughout the entire operation. These results affirm the efficacy of our approach in dynamic, multi-obstacle environments for obstacle recognition and navigation safety.
\begin{figure}[!t]
  \centering
    \begin{tikzpicture}
    \filldraw[black] (-10-0.9+0.8,-0.5) circle (1.5pt) node[right, text=black, font=\footnotesize]{Static MEBs};
    \filldraw[red] (-7.8+0.8-1.2,-0.5) circle (1.5pt) node[right, text=black, font=\footnotesize]{Moving MEBs};
    \filldraw[blue] (-5.6+0.8-1.2,-0.5) circle (1.5pt) node[right, text=black, font=\footnotesize]{Original MEBs};
    \filldraw[darkgreen] (-3.2+0.8-1.4,-0.5) circle (1.5pt) node[right, text=black, font=\footnotesize]{Prediction MEBs};
    \draw[blue, line width=1pt] (-10-0.9+0.8,0) -- (-9.5,0) node[right, text=black,font=\footnotesize]{MPC-CBF-Zone};
    \draw[ppurple, line width=1.5pt] (-6.8-0.5+0.1,0) -- (-6.3-0.5+0.1,0) node[right, text=black, font=\footnotesize]{Linear velocity};
    \draw[deeporange, line width=1.5pt] (-5.4-0.3+0.2+1,0) -- (-4.9-0.3+0.2+1,0) node[right, text=black, font=\footnotesize]{Angular velocity};
\end{tikzpicture}

  \hspace{-1.6em}
  \begin{tikzpicture}
    \begin{scope}[xshift=-3.5cm, yshift=0cm ]
        \node[above right] (fig63) at (0,0){\includegraphics[ height=0.351\linewidth]{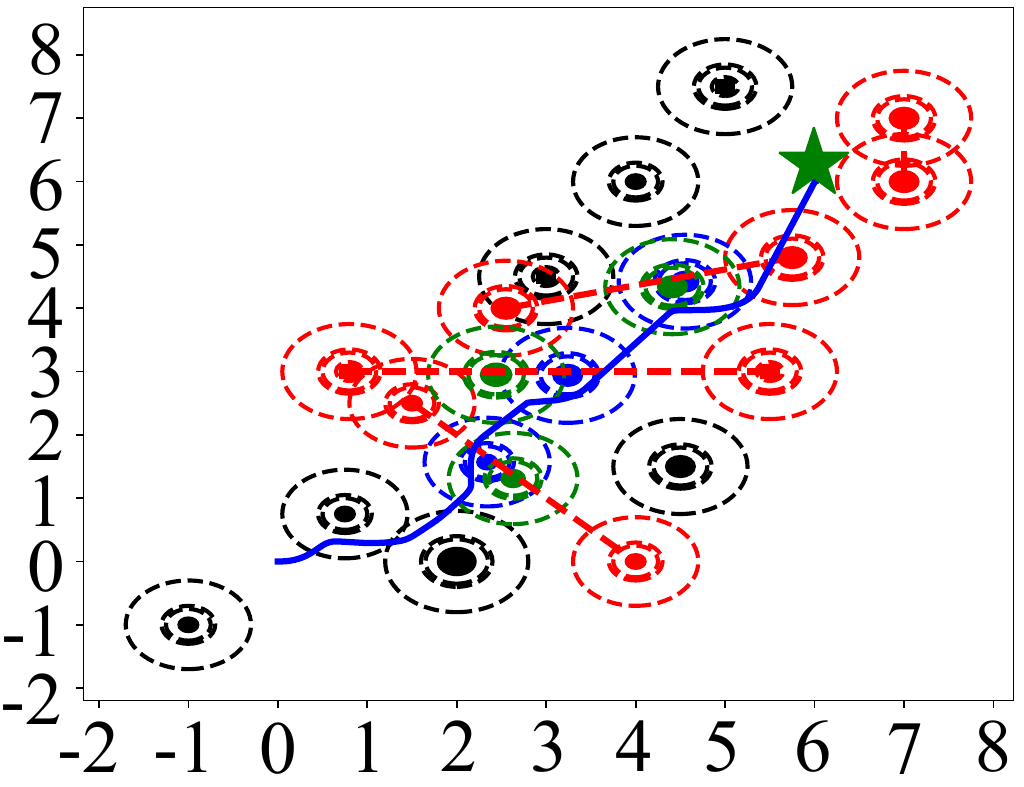}};
        \node at ($(fig63.south)+(0,-0.35)$) {\normalsize (a) trajectory};
            \node[rotate=90] at ($(fig63.west)+(0,0)$) {\scriptsize y position [m]};
            \node at ($(fig63.south)+(-0.08,-0.02)$) {\scriptsize x position [m]};
    \end{scope}
    \begin{scope}[xshift=0.8cm, yshift=0cm]
        \node[above right] (fig64) at (0,0){\includegraphics[ height=0.351\linewidth]{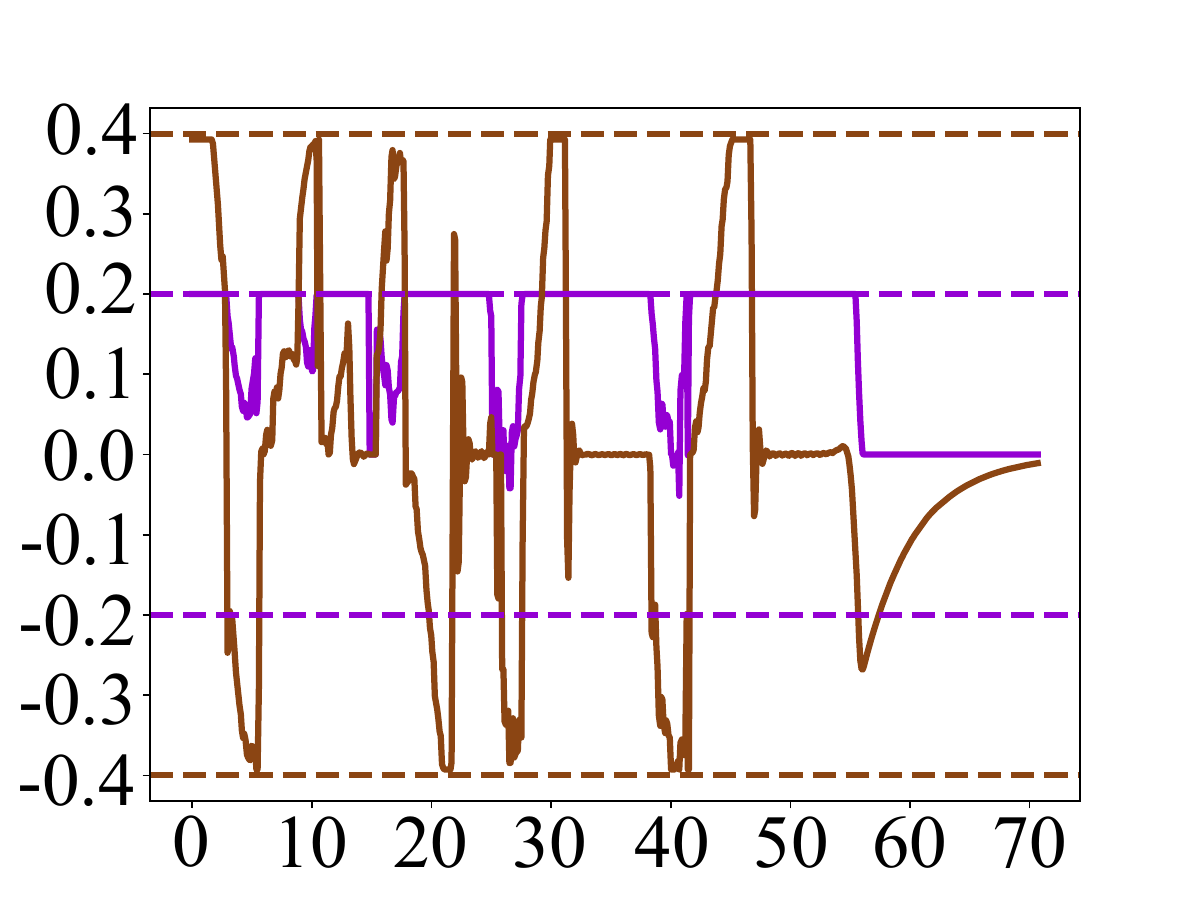}};
        \node at ($(fig64.south)+(0,-0.35)$) {\normalsize (b) control inputs};
            \node[rotate=90] at ($(fig64.west)+(-0.0,0)$) {\scriptsize control inputs};
            \node at ($(fig64.south)+(0,-0.02)$) {\scriptsize time [s]};
    \end{scope}
  \end{tikzpicture}
  \caption{Trajectories utilizing MPC-CBF-Zone method and the control inputs of the mobile robot under moving obstacle scenarios.}
  \label{fig7}
  \vspace{-1em}
\end{figure}
\begin{figure}[!t]
  \centering
    \begin{tikzpicture}
    \draw[blue, line width=1pt] (-10-0.9+0.8+1,0) -- (-9.5+1,0) node[right, text=black,font=\footnotesize]{$\boldsymbol{u}_{s}^{k}$ without $\boldsymbol{u}_{nom}^{k}$};
    \draw[red, line width=1.5pt] (-6.8-0.5+0.4+1,0) -- (-6.3-0.5+0.4+1,0) node[right, text=black, font=\footnotesize]{$\boldsymbol{u}_{b,s}^{k}$};
    \draw[black, line width=1.5pt] (-5.4-0.3+0.4+1,0) -- (-4.9-0.3+0.4+1,0) node[right, text=black, font=\footnotesize]{$\boldsymbol{u}_{nom}^{k}$};
\end{tikzpicture}

  \hspace{-1.08em}
  \begin{tikzpicture}
    \begin{scope}[xshift=-3.2cm, yshift=0cm ]
        \node[above right] (fig63) at (0,0){\includegraphics[ height=0.351\linewidth]{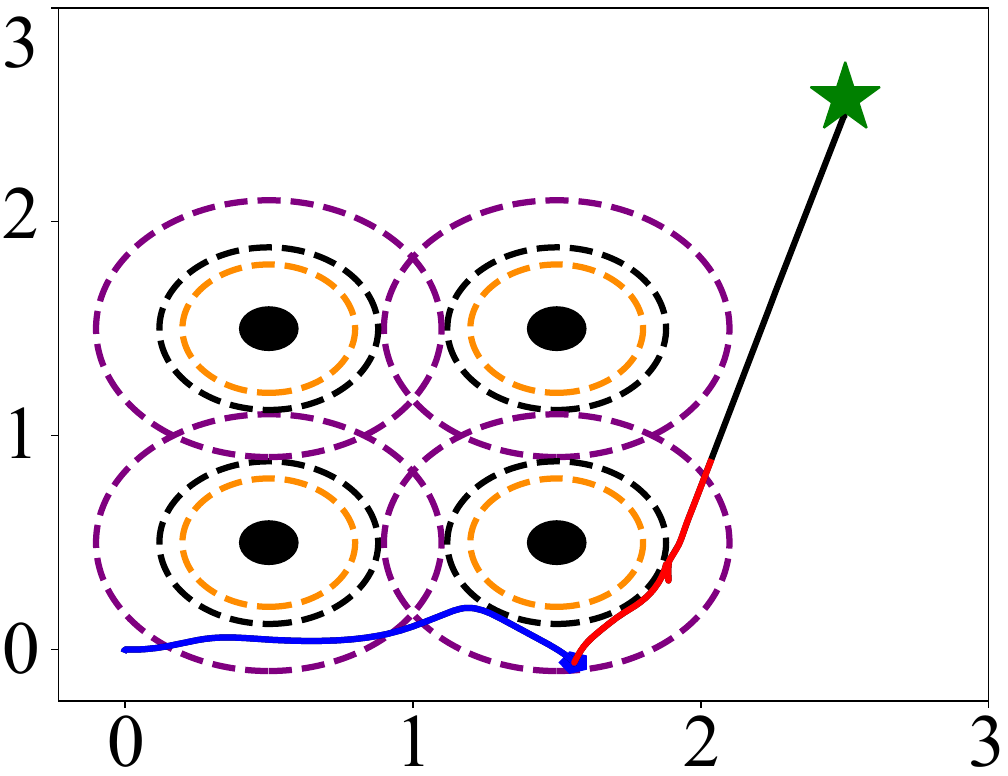}};
        \node at ($(fig63.south)+(0,-0.35)$) {\normalsize (a) trajectory};
        \draw[-{Latex[length=3mm, width=2mm]}, red, line width=1pt] (3.0,2.2) -- (2.6,2.6) 
         node[pos=1, left, text=black, font=\footnotesize]{goal-seeking};
         \draw[-{Latex[length=3mm, width=2mm]}, red, line width=1pt] (2.8,1.1) -- (3.2,0.7);
         \node[inner sep=1pt, font=\footnotesize] at (3.1,0.55) {exploration};
            \node[rotate=90] at ($(fig63.west)+(-0.01,0)$) {\scriptsize y position [m]};
            \node at ($(fig63.south)+(-0.08,-0.02)$) {\scriptsize x position [m]};
    \end{scope}
    \begin{scope}[xshift=1.0cm, yshift=0cm]
        \node[above right] (fig64) at (0,0){\includegraphics[ height=0.351\linewidth]{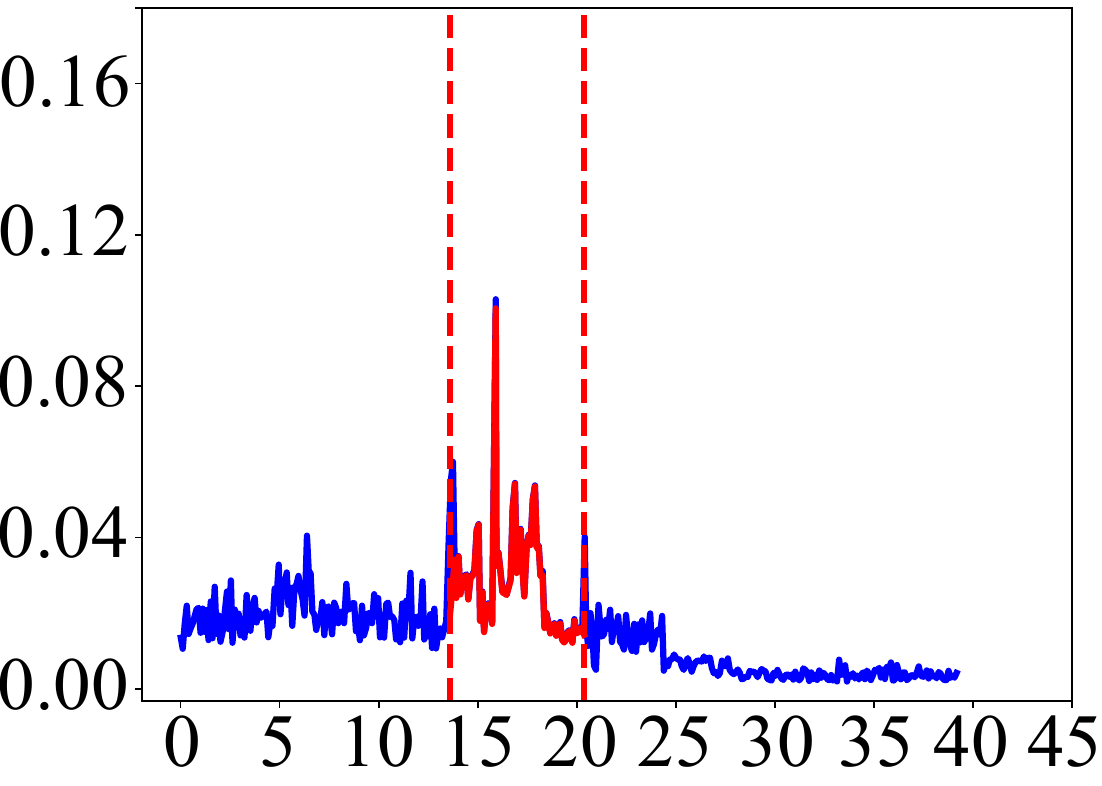}};
        \node at ($(fig64.south)+(0,-0.35)$) {\normalsize (b) solving time};
            \node[rectangle, rounded corners=1pt,minimum width=2.08cm,minimum height=0.8cm,draw=gray,fill=white!10](box) at ($(fig64.north east)+(-1.3,-0.8)$) {};
            \node (label) at ($(box.west)+(1.28,0.2)$) {\footnotesize without $\boldsymbol{u}_{b,s}^{k}$};
            \node (label2) at ($(box.east)+(-1.0,-0.2)$) {\footnotesize with $\boldsymbol{u}_{b,s}^{k}$};
            \draw[draw=blue,line width=1pt] ($(label.west)+(-0.25,-0.05)$) -- ($(label.west)+(0.05,-0.05)$) node (legend){};
            \draw[draw=red,line width=1pt] ($(label2.west)+(-0.25,-0.05)$) -- ($(label2.west)+(0.05,-0.05)$);
            \node[rotate=90] at ($(fig64.west)+(-0.0,0)$) {\scriptsize solving time [s]};
            \node at ($(fig64.south)+(0,-0.02)$) {\scriptsize time [s]};
    \end{scope}
  \end{tikzpicture}
  \caption{Trajectory and solving time comparison for different controllers across various system modes.}
  \label{fig8}
  \vspace{-1em}
\end{figure}

Additionally, we validate the superiority of the backup controller $\boldsymbol{u}_{b,s}^{k}$. We consider extreme scenarios, specifically those where $\boldsymbol{u}_{nom}^{k}$ is not included in \eqref{all-controller}. Without considering $\boldsymbol{u}_{nom}^{k}$ as we do, the mobile robot would halt in the buffer zone, as indicated by the intersection of the blue and red lines in Fig. \ref{fig8}(a). However, when $\boldsymbol{u}_{b,s}^{k}$ is activated, the system automatically searches for a safe path as shown by the red line in Fig. \ref{fig8}(a). Ultimately, the system enters goal-seeking mode and reaches the goal within 40s. Although the backup controller $\boldsymbol{u}_{b,s}^{k}$ has the disadvantage of longer multi-step prediction solving times compared to $\boldsymbol{u}_{s}^{k}$, as shown in Fig. \ref{fig8}(b), our safety controller $\boldsymbol{u}_{s}^{k}$ incorporating $\boldsymbol{u}_{nom}^{k}$, ensures a low activation probability for $\boldsymbol{u}_{b,s}^{k}$. However, when activation is necessary, it results in smoother trajectories, as demonstrated in  Fig. \ref{fig5}.

\begin{figure*}[t]
  \centering
  \begin{minipage}[t]{0.48\textwidth}
  \begin{tikzpicture}
    \begin{scope}[xshift=-3.5cm, yshift=0cm ]
        \node[above right] (fig63) at (-2,0){\includegraphics[ height=0.62\linewidth]{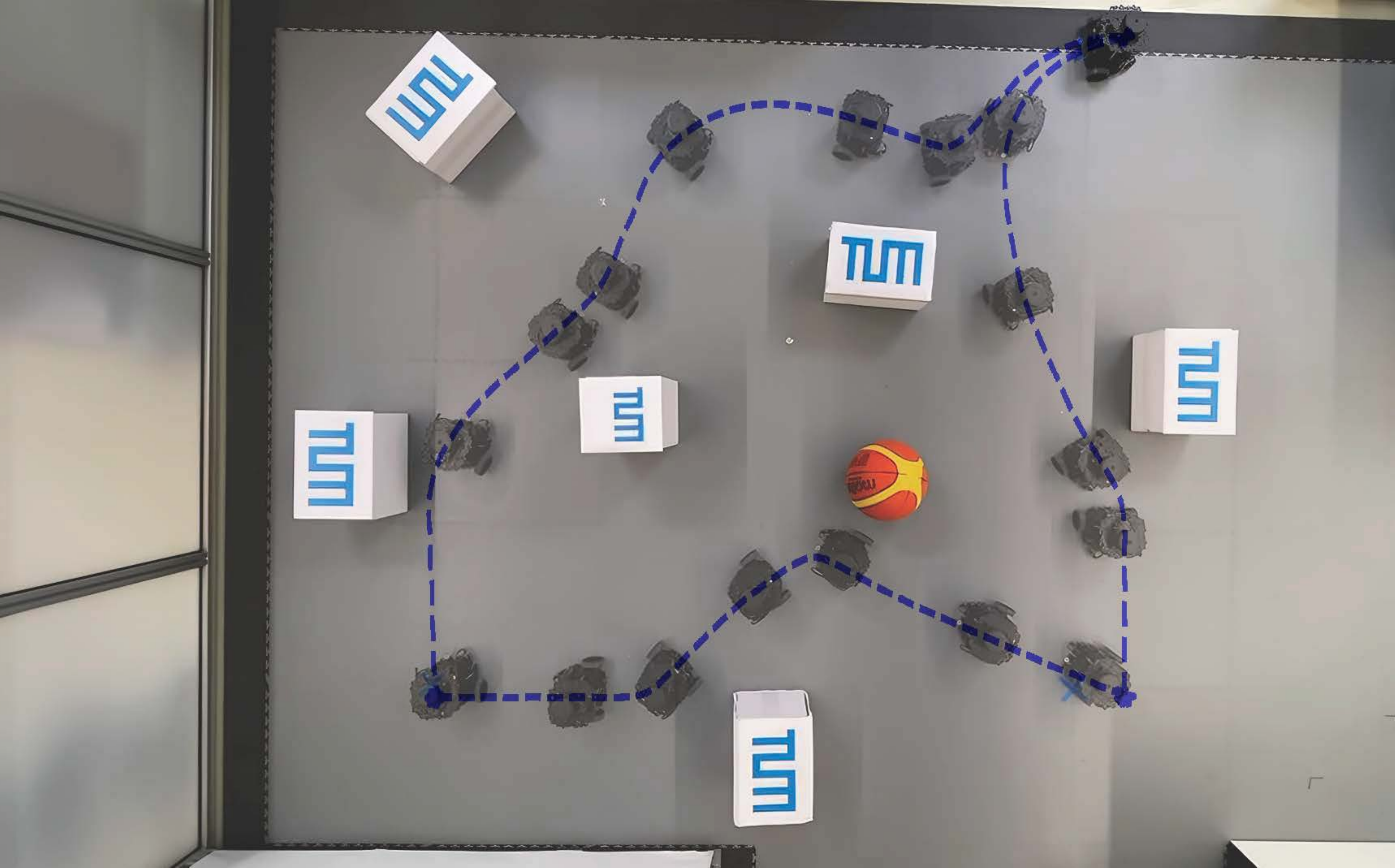}};
    \end{scope}
    \draw[-{Latex[length=3mm, width=2mm]}, red, line width=1pt] (-1,2.2) -- (-2.6,3.3);
    \begin{scope}[xshift=1cm, yshift=0cm]
        \node[above right] (fig64) at (-6.5,3.2){\includegraphics[ height=0.25\linewidth]{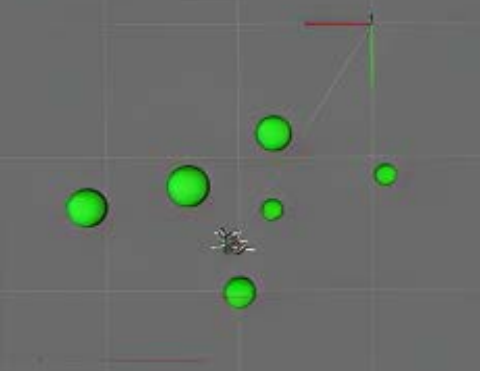}};
        \draw[red, dashed, line width=1pt] (-2,1.5) rectangle (-1.4,2.2);
        \draw[red, dashed, line width=1pt] (-5.2,3.9) rectangle (-4.8,4.2);
        \node[inner sep=1pt, font=\footnotesize, text=red] at (-4.9,3.5) {Mobile robot};
        \draw[green, dashed, line width=1pt] (-1+1.1,4.8) rectangle (-0.4+1.1,5.4);
        \node[inner sep=1pt, font=\footnotesize, text=green] at (-0.4+1.2,4.3) {Start point and $p_{3}$};
        \draw[green, dashed, line width=1pt] (-1+1.1,1) rectangle (-0.4+1.1,1.6);
        \node[inner sep=1pt, font=\footnotesize, text=green] at (-0.4+0.9,0.6) {Goal $p_{2}$};
        \draw[green, dashed, line width=1pt] (-1-2.9,1) rectangle (-0.4-2.9,1.6);
        \node[inner sep=1pt, font=\footnotesize, text=green] at (-0.4-3.2,0.6) {Goal $p_{1}$};
    \end{scope}
  \end{tikzpicture}
  \caption{Experiment and perception result for static obstacle scenarios.}
  \label{fig9}
  \end{minipage}
    \hspace{.15in}
    \begin{minipage}[t]{0.48\textwidth}
        \begin{tikzpicture}
    \begin{scope}[xshift=-3.5cm, yshift=0cm ]
        \node[above right] (fig63) at (-2,0){\includegraphics[ height=0.62\linewidth]{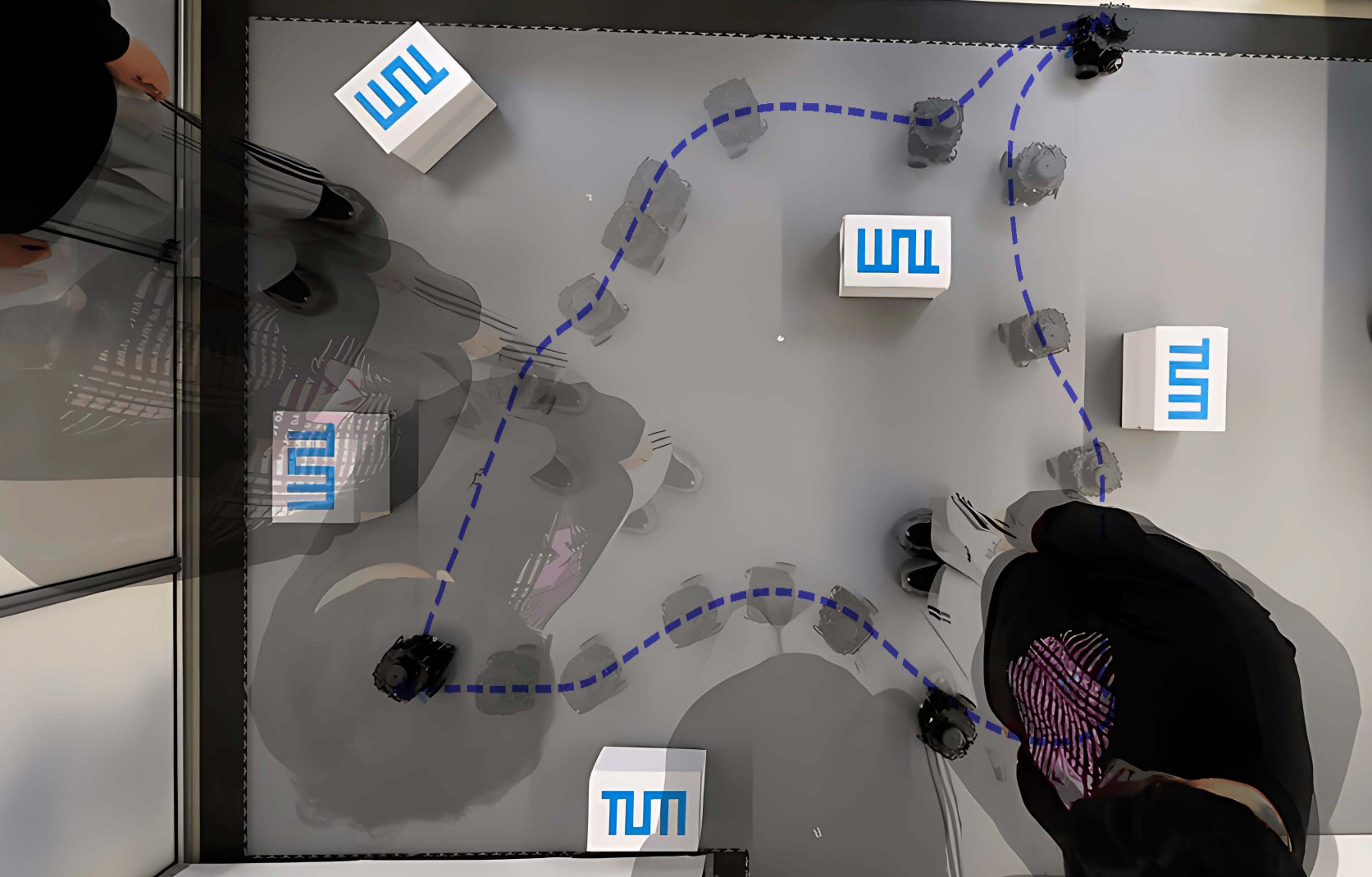}};
    \end{scope}
    \draw[-{Latex[length=3mm, width=2mm]}, red, line width=1pt] (-2,3.9) -- (-2.5,4.3);
    \begin{scope}[xshift=1cm, yshift=0cm]
        \node[above right] (fig64) at (-6.5,3.2){\includegraphics[width=0.33\linewidth, height=0.245\linewidth]{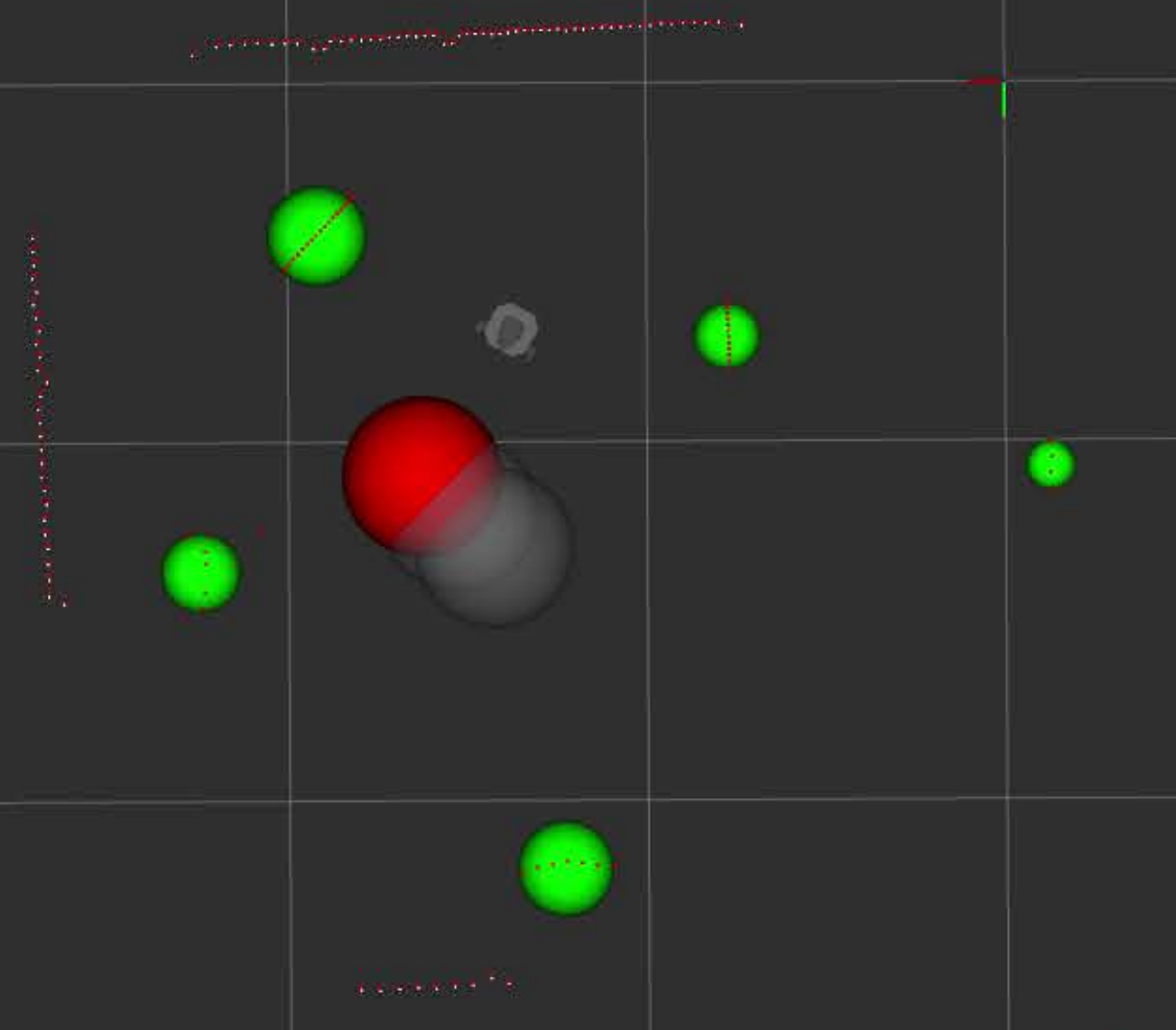}};
        \draw[red, dashed, line width=1pt] (-5.25,4.55) rectangle (-5.0,4.85);
        \node[inner sep=1pt, font=\footnotesize, text=red] at (-4.9,5.23) {Mobile robot};
        \draw[red, dashed, line width=1pt] (-2.95-0.1,3.25) rectangle (-2.35-0.1,3.85);
        \draw[green, dashed, line width=1pt] (-1+0.93,4.85) rectangle (-0.4+0.93,5.35);
        \node[inner sep=1pt, font=\footnotesize, text=green] at (-0.4+1.2,4.2) {Start point and $p_{3}$};
        \draw[green, dashed, line width=1pt] (-1+1.1,1.1) rectangle (-0.4+1.1,1.7);
        \node[inner sep=1pt, font=\footnotesize, text=green] at (-0.4+0.9,0.6) {Goal $p_{2}$};
        \draw[green, dashed, line width=1pt] (-1-3.1,1.1) rectangle (-0.4-3.1,1.7);
        \node[inner sep=1pt, font=\footnotesize, text=green] at (-0.4-3.2,0.6) {Goal $p_{1}$};
    \end{scope}
  \end{tikzpicture}
  \caption{Experiment and perception result for dynamic obstacle scenarios.}
  \label{fig10}
  \end{minipage}
    
  \vspace{-0.1em}
\end{figure*}

Finally, the experimental results, illustrated in Figs. \ref{fig9}-\ref{fig10}, confirm the effectiveness of our approach in detecting both static and dynamic obstacles. 
\begin{table}[ht]
\caption{Comparison of time taken to reach the goal[Unit: \textcolor{red}{second}]}
\vspace{-0.5em}
\centering
\resizebox{0.48\textwidth}{!}{
\begin{tabular}{p{2.6cm}|c|c|c|c|c|c|c|c|c}
 \hline
 & \multicolumn{1}{|c|}{1 obs} & \multicolumn{1}{|c|}{5 obs} & \multicolumn{3}{|c}{10 obs} & \multicolumn{4}{|c}{20 obs}\\
 \hline
   Goal     &$p_1$ &$p_1$ &$p_1$ &$p_2$ &$p_3$ &$p_1$ &$p_2$ &$p_3$ &$p_4$\\
 \hline
 MPC-Base &\textbf{26.9} &\textbf{22.5} &26.1  &60.8 &97.7 &-  & - & - & - \\
 MPC-All    &27.1 &25 &-  &- &- &-  &- &- &- \\
 NMPC-CBF    &32.8 &42.63 &-  &- &- &-  &- &- &- \\
 MPC-Zone$^{*}$   &29.2 &23.4  &\textbf{24.1}  &\textbf{58.7}  &\textbf{89.9}  &32.6  &71.2 &125.7 &169.3 \\
 MPC-CBF-Zone$^{*}$   &31.2 &29.3  &29.9  &70.0  &109.8  &\textbf{31.8}  &\textbf{68.2} &\textbf{116.3} &\textbf{159.2} \\
 \hline
\end{tabular}}\label{tab: change_accuracy_rising}
\vspace{-0.8em}
\end{table}
Notably, Fig. \ref{fig10} demonstrates its capability to accurately predict the trajectories of dynamic obstacles in environments among multiple obstacles, proving the real-world practical effectiveness of our perception algorithm. The strategic utilizing safety controller \eqref{all-controller} allows for safe, sequential goal navigation within the solving time of approximately 0.011s, reflecting high processing speeds. This not only highlights our method's proficiency in ensuring safety and operational efficiency in complex, dynamic settings but also demonstrates its exceptional ability to manage scenarios featuring numerous obstacles.

\section{Conclusions}

This paper presents a goal-seeking and exploration framework specifically designed to ensure safe control in dynamic environments featuring multiple obstacles. It enables the real-time identification and prediction of obstacles' states, guaranteeing that the state and quantity of obstacles do not affect the computation time required for control inputs. Moreover, it ensures safety while accomplishing predefined tasks. The effectiveness of our approach has been demonstrated through both simulation and practical hardware experiments, showcasing its potential for adaptation to more sophisticated, higher-order systems in the future.

\balance
\bibliographystyle{IEEEtran}
\bibliography{paper}

\end{document}